\documentclass[11pt]{article}

\usepackage{amsthm}
\usepackage{mathtools}
\usepackage{amsfonts}

\usepackage{booktabs}
\usepackage[a4paper, total={8.5in, 11in},margin=1in]{geometry}
 
\usepackage{comment}
\usepackage[labelfont=bf]{caption}

\usepackage{dsfont}

\usepackage[ruled,vlined]{algorithm2e}

\usepackage[colorlinks=true,linkcolor=red,citecolor=blue]{hyperref}
\usepackage{cleveref}
\DeclarePairedDelimiterX{\infdivx}[2]{(}{)}{%
  #1\;\delimsize\|\;#2%
}

\newtheorem{theorem}{Theorem}[section]
\newtheorem{definition}[theorem]{Definition}

\newtheorem{lemma}[theorem]{Lemma}

\bibliographystyle{plainnat}
\usepackage[round]{natbib}

\DeclareMathOperator*{\E}{\mathbb{E}}

\DeclareMathOperator{\poly}{poly}
\DeclareMathOperator{\sign}{sign}
\DeclareMathOperator{\err}{err}
\DeclareMathOperator{\opt}{OPT}
\DeclareMathOperator{\noisy_oracle}{EX^{noisy}}
\DeclareMathOperator{\noiseless_oracle}{EX}
\DeclareMathOperator{\oracle_eta}{EX^{\eta}}
\DeclareMathOperator{\stat}{STAT}

\DeclareMathOperator{\pr}{\mathbf{Pr}}

\DeclareMathOperator{\algocsq}{\textsc{RobustLearningViaCSQs}}
\DeclareMathOperator{\algosq}{\textsc{RobustLearningViaSQs}}

\author{
  Ioannis Anagnostides\\[-2mm]
  National Technical University of Athens\\[-2mm]
  \texttt{ioannis.anagnostides@gmail.com}
  \and
  Themis Gouleakis\\[-2mm]
  Max Planck Institute for Informatics\\[-2mm]
  \texttt{tgouleak@mpi-inf.mpg.de}
  \and 
  Ali Marashian\\[-2mm]
  Sharif University of Technology\\[-2mm]
  \texttt{marashian@ce.sharif.edu}
}

\date{}                     

\title{Robust Learning under Strong Noise via SQs}

\begin{document}

\maketitle
\pagenumbering{gobble}

\begin{abstract}
    This work provides several new insights on the robustness of Kearns' statistical query framework against challenging label-noise models. First, we build on a recent result by \cite{DBLP:journals/corr/abs-2006-04787} that showed noise tolerance of distribution-independently evolvable concept classes under Massart noise. Specifically, we extend their characterization to more general noise models, including the Tsybakov model which considerably generalizes the Massart condition by allowing the flipping probability to be arbitrarily close to $\frac{1}{2}$ for a subset of the domain. As a corollary, we employ an evolutionary algorithm by \cite{DBLP:conf/colt/KanadeVV10} to obtain the first polynomial time algorithm with arbitrarily small excess error for learning linear threshold functions over any spherically symmetric distribution in the presence of spherically symmetric Tsybakov noise. Moreover, we posit access to a stronger oracle, in which for every labeled example we additionally obtain its flipping probability. In this model, we show that every SQ learnable class admits an efficient learning algorithm with $\opt + \epsilon$ misclassification error for a broad class of noise models. This setting substantially generalizes the widely-studied problem of classification under RCN with known noise rate, and corresponds to a non-convex optimization problem even when the noise function -- i.e. the flipping probabilities of all points -- is known in advance.
\end{abstract}

\clearpage

\pagenumbering{arabic}

\section{Introduction}

Label noise is a critical impediment in machine learning as it may dramatically reduce the accuracy of the classifier and augment the computational and sample requirements of the learning algorithm. Naturally, designing efficient and noise tolerant paradigms has been a central endeavor from the inception of machine learning with Rosenblatt's celebrated perceptron algorithm (\cite{Rosenblatt1958ThePA}). Indeed, a vast body of work has been devoted to tackling different types of label noise. As it turns out, the guarantees we can hope for crucially depend on the underlying noise model. In the agnostic model (\cite{HAUSSLER199278,DBLP:conf/colt/KearnsSS92}) -- where an adversary is allowed to corrupt some fraction of the labels arbitrarily --  even weak learning is known to be NP-hard (\cite{4031391,4031389,DBLP:conf/stoc/Daniely16}), while the best results require additional assumptions on the marginal distribution over the instance space, and obtain much weaker multiplicative approximations (\cite{DBLP:conf/colt/Daniely15,DBLP:conf/stoc/AwasthiBL14}). On the other hand, in the random classification noise (henceforth RCN) of \cite{DBLP:journals/ml/AngluinL87} -- where the label of each example is flipped independently with some \emph{fixed} probability $\gamma < \frac{1}{2}$ -- strong positive results have been established, commencing from \cite{10.1145/180139.181176} and \cite{DBLP:conf/focs/BlumFKV96}. Typical approaches to learn in the presence of noise include empirical risk minimization (ERM) (\cite{DBLP:journals/tcyb/ManwaniS13}) with a smooth and convex surrogate of the $0-1$ loss (\cite{10.2307/30047445}), boosting techniques (\cite{friedman2000,DBLP:conf/colt/Freund99}), variants of the perceptron algorithm (\cite{DBLP:conf/nips/LiL99,DBLP:journals/jmlr/KhardonW07}), and SVMs (\cite{DBLP:conf/interspeech/GanapathirajuP00,DBLP:journals/prl/LinW04}). 

Moreover, a powerful technique for designing noise tolerant algorithms was introduced by \cite{10.1145/293347.293351} in the form of the \emph{statistical query} (SQ) framework, a natural restriction to Valiant's PAC learning model (\cite{DBLP:conf/stoc/Valiant84}) in which the learner employs ''large'' samples, instead of properties of specific individual examples. Importantly, Kearns demonstrated that any procedure based on statistical queries can be automatically converted to a learning algorithm robust to the RCN model, with rate smaller than the information-theoretic barrier of $\frac{1}{2}$. In fact, virtually all PAC learnable concept classes are also learnable with access to statistical queries; a notable exception is the class of parities that includes all functions equal to an XOR of some subset of Boolean variables (\cite{10.1145/792538.792543}). As a result, most RCN-tolerant PAC learning algorithms were either derived from the SQ model, or can be easily cast into it. Our work follows this long line of research and strengthens prior results along several lines, extending the robustness of the SQ framework against more challenging noise models.  

In the first part of our work we consider the standard noisy oracle model, defined in the following generic form:

\begin{definition}[Noisy Oracle]
    \label{definition:noisy_oracle}
Let $f$ be an unknown target function in a family of Boolean functions $\mathcal{C}$ over $\mathcal{X} \subseteq \mathbb{R}^d$, $\mathcal{D}_{\mathbf{x}}$ an arbitrary distribution over $\mathcal{X}$, and $\eta : \mathcal{X} \mapsto \left[0, \frac{1}{2} \right)$ an unknown function. A noisy oracle $\noisy_oracle(f, \mathcal{D}_{\mathbf{x}}, \eta)$ returns a labeled example $(\mathbf{x}, y)$, where $\mathbf{x} \sim \mathcal{D}_{\mathbf{x}}$, $y = f(\mathbf{x})$ with probability $1 - \eta(\mathbf{x})$ and $y = - f(\mathbf{x})$ with probability $\eta(\mathbf{x})$. We let $\mathcal{D}$ represent the joint distribution on $(\mathbf{x},y)$ generated by the above oracle.
\end{definition}

Of course, the known guarantees crucially depend on the assumptions we make on the noise function $\eta$. As we previously discussed, the special case of the RCN model -- i.e. $\eta(\mathbf{x}) = \gamma < \frac{1}{2}$ -- is known to be efficiently addressed due to the symmetry of the underlying noise. A much more realistic and widely-studied condition that has received considerable attention in computational learning theory in recent years (\cite{DBLP:conf/colt/AwasthiBHZ16,pmlr-v65-zhang17b,NIPS2017_6706,DBLP:journals/corr/abs-2002-04840,DBLP:conf/colt/AwasthiBHU15,DBLP:journals/corr/abs-2002-05632,DBLP:conf/nips/DiakonikolasGT19,DBLP:journals/corr/abs-2006-04787}) is the so-called Massart or \emph{bounded} model (\cite{massart2006}), where $\eta(\mathbf{x}) \leq \gamma$ for some parameter $\gamma \in \left[0, \frac{1}{2}\right)$. In this paper, we are tackling a substantially more general model than the Massart, defined as follows.

\begin{definition}[Tsybakov noise]
    A noise function $\eta$ satisfies the $(\alpha, A, t_0)$-Tsybakov\footnote{Sometimes referred to as the Mammen-Tsybakov condition} condition with $\alpha \in (0,1)$, $A > 0$, and $t_0 \in \left( 0, \frac{1}{2} \right]$ if 
    
    \begin{equation}
        \pr_{\mathbf{x} \sim \mathcal{D}_{\mathbf{x}}}\left[\eta(\mathbf{x}) \geq \frac{1}{2} - t \right] \leq A t^{\frac{\alpha}{1 - \alpha}},
    \end{equation}
    for all $t \in [0, t_0]$.
\end{definition}

Thus, the Tsybakov condition allows the flipping probability to be arbitrarily close to $\frac{1}{2}$ for a subset of the domain, and as such, it strictly generalizes the Massart condition. Indeed, for $\alpha \to 1$ it follows that $t^{\frac{\alpha}{1 - \alpha}} \to 0$, and hence, the Tsybakov condition yields a $\gamma$-Massart noise with $\gamma = \frac{1}{2} - t_0$. This particular noise model was introduced by \cite{mammen1999} in a slightly stronger form, and it was subsequently refined by \cite{10.2307/3448504}. However, although the information-theoretic aspects of the Tsybakov noise have been well-understood in statistics (\cite{boucheron_bousquet_lugosi_2005,10.2307/30047445,10.5555/945365.964298,morel2007concentration,koltchinskii2006,10.5555/2789272.2912111}), developing computationally efficient algorithms has remained elusive and a notable open problem in computational learning theory. Importantly, the techniques employed in the Massart model inherently fail and new algorithmic ideas are required. In this context, we build on a recent result of \cite{DBLP:journals/corr/abs-2006-04787} which showed that Valiant's notion of \emph{evolvability} (\cite{DBLP:journals/jacm/Valiant09}) implies efficient learnability in the presence of Massart noise. Our first contribution is to extend their characterization to a broader class of noise models, and as a corollary, we establish a polynomial time algorithm with arbitrarily small excess error in the presence of spherically symmetric Tsybakov noise.

In the second part of our study we posit a stronger oracle. Specifically, we assume that for every labeled example $(\mathbf{x}, y)$ drawn from $\mathcal{D}$ we additionally obtain its corruption rate $\eta(\mathbf{x})$. This model strongly generalizes the widely-studied RCN with known rate\footnote{In RCN the assumption of known noise rate can be easily removed through simple techniques; e.g., see \cite{10.5555/46710}}, and is motivated in a number of practical applications. Indeed, label-noise typically reflects a measure of confidence or uncertainty of the experts with respect to the given instance (\cite{DBLP:journals/tnn/FrenayV14}), and it is natural to assume that the expert can provide a quantifiable measure of that uncertainty. In some cases multiple experts may be employed in order to determine the label in a subjective task (\cite{10.2307/2346806}), and the error rate can be approximated through the disagreement among the experts; typical scenarios include medical applications (\cite{10.1093/bioinformatics/btl346}) and image data analysis (\cite{10.5555/2998687.2998822,10.1016/0167-8655(96)00105-5}). Moreover, label noise may be also caused from communication errors (\cite{DBLP:journals/jair/BrodleyF99}) and knowing the reliability of the different channels of communication would provide an estimate of the noise rate for every observed labeled example. Finally, stochastic label noise can be intentionally introduced in order to protect the agents' privacy; in such cases, the noise may be fully specified (\cite{10.2307/1403910}) and the crucial question is how robust is the system to potential leakage of the noise function to adversarial and potentially malicious parties. 

Our key contribution in this model is to show that with access to such an oracle, every concept class efficiently learnable with statistical queries admits a polynomial time learning algorithm with arbitrarily small excess error for a broad class of noise models, including Massart and Tsybakov\footnote{Although it should be noted that the Tsybakov condition is much more benign with such an oracle}. Our argument extends Kearns' celebrated simulation of statistical queries in the presence of RCN, and is of particular importance in light of lower bounds against convex surrogates in the presence of Massart noise, applicable even when the noise function is known (\cite{DBLP:conf/nips/DiakonikolasGT19}). 

\subsection{Related Work}

Developing efficient and noise tolerant algorithms via the statistical query framework has been an area of prolific research from its inception. Indeed, the main motivation of Kearns' (\cite{10.1145/293347.293351}) original formulation was to effectively combat the RCN model. Subsequently, a considerable number of works have pursued analogous guarantees against various noise models. A classic result by \cite{DBLP:conf/colt/Decatur93} established robustness of the statistical query framework against Valiant's \emph{malicious error} model (\cite{10.5555/1625135.1625242}) -- in which the adversary is allowed to distort a fraction of the observed examples, as well as a hybrid model that combined RCN with malicious error (\cite{Decatur1996}). Several results have also been obtained for the so-called \emph{attribute noise} in which every bit of the observed instance $\mathbf{x}$ is flipped with some fixed probability; we refer to the study of \cite{DBLP:conf/colt/DecaturG95} and references therein.  

A more modern line of work has focused on establishing distribution-specific PAC learning algorithms in the presence of Massart noise. More precisely, although the Massart model goes back to \cite{10.5555/93025.93060,10.1145/130385.130433} and \cite{646140} (where it was studied under the name \emph{malicious misclassification noise}), the first polynomial time algorithms began to formulate quite recently due to the assymetric nature of the noise. Indeed, this endeavor was initiated by \cite{DBLP:conf/colt/AwasthiBHU15}, and led to gradual improvements (\cite{DBLP:conf/colt/AwasthiBHZ16,pmlr-v65-zhang17b,NIPS2017_6706,DBLP:journals/corr/abs-2002-04840}) for the fundamental class of LTFs. The state of the art in the distribution-specific setting was recently obtained by \cite{DBLP:journals/corr/abs-2002-05632}, extending the optimality guarantee beyond log-concave and s-concave functions, to a class of well-behaved distributions. Yet, we stress that the aforementioned approaches for $\gamma$-Massart noise inherently fail under the more general Tsybakov condition, given that they need $\Omega(1/(1 - 2\gamma))$ number of samples, and the Tsybakov noise would require $\gamma = \frac{1}{2}$. 
    
On the distribution-independent PAC learning model with Massart noise the main breakthrough was made recently by \cite{DBLP:conf/nips/DiakonikolasGT19}, where the first polynomial time algorithm for LTFs with non-trivial misclassification error was obtained. More precisely, they established a non-proper learner with error $\gamma + \epsilon$, for any $\epsilon > 0$\footnote{This guarantee is -- in general -- information-theoretically sub-optimal and hence, it does not subsume the aforementioned works.}; we remark that this guarantee would fail to yield even a weak learner in the presence of Tsybakov noise. Building on their ideas, a recent work by \cite{DBLP:journals/corr/abs-2006-04787} made several remarkable advancements. Among others, they developed a black-box knowledge distillation procedure which converts any classifier -- potentially non-proper -- to a proper halfspace with equally good performance, while they also proved -- based on a result by \cite{DBLP:journals/jmlr/Feldman11} -- a super-polynomial statistical query lower bound for achieving arbitrarily small excess error in the presence of Massart noise. Naturally, this lower bound is also applicable in the presence of the more general Tsybakov noise.
    
Finally, the first concrete progress in deriving computationally efficient learning algorithms in the presence of Tsybakov noise was made recently by \cite{DBLP:journals/corr/abs-2006-06467}. More precisely, they established an algorithm -- for learning LTFs over well-behaved distributions -- that incurs $\epsilon$ excess misclassification error with sample complexity and running time $d^{\mathcal{O}(\log^2(1/\epsilon)/\alpha^2)}$, where $\alpha$ is the parameter of the Tsybakov condition; thus, their algorithm is quasi-polynomial in $1/\epsilon$, while the dependence on $1/\alpha$ is exponential with respect to dimension of the instance space $d$. Therefore, they left as an open question whether a polynomial time algorithm exists. This problem has been also addressed in a work concurrent to ours by \cite{diakonikolas2020polynomial}, establishing a polynomial time learning algorithm in the presence of Tsybakov noise for a class of well-behaved distributions. Prior to these works, the only known algorithms that succeed in the presence of Tsybakov noise were obtained by a reduction to agnostic learning, and subsequently have a prohibitive complexity. More precisely, the $L_1$-regression algorithm (\cite{DBLP:journals/siamcomp/KalaiKMS08}) requires doubly-exponential running time in order to recover the optimal halfspace even for log-concave distributions. 

\subsection{Our Contributions}

Our work provides several new insights on the robustness of Kearns' statistical query framework with respect to challenging noise models. In the first part we rely on a recent result obtained by \cite{DBLP:journals/corr/abs-2006-04787}; specifically, they showed that any concept class learnable with access exclusively to correlational statistical queries -- or equivalently any evolvable class (see \Cref{section:preliminaries}) -- admits a polynomial time learning algorithm with arbitrarily small excess error in the presence of Massart noise. Our first contribution is to strengthen their argument in the presence of more general noise models. More precisely, we identify a natural measure of the intensity of the noise -- measuring the average proximity of the flipping probability to the barrier of $1/2$ -- which we refer to as the \emph{magnitude} of the noise with respect to the underlying distribution. We observe that this parameter allows for a unifying treatment of a broad class of noise models, and in particular our main focus lies on the Tsybakov condition, a notoriously hard model which allows the flipping probability to be arbitrarily close to $1/2$; as such, it substantially generalizes the Massart condition. In light of this, although the Tsybakov noise model has received considerable attention in statistics from an information-theoretic standpoint, establishing efficient learning algorithms has remained elusive and a notable open problem in computational learning theory. 

In this context, we show that when the magnitude of the noise is polynomially-bounded, the algorithm proposed by \cite{DBLP:journals/corr/abs-2006-04787} (see \Cref{algorithm:CSQ_reduction}) efficiently obtains a hypothesis with arbitrarily small excess error for any evolvable class. Informally, we establish the following theorem:

\begin{theorem}
    \label{theorem:CSQ_learnability}
    Consider any CSQ learnable concept class, and any distribution $\mathcal{D}$ corrupted with label-noise of polynomially-bounded magnitude. Then, there exists a polynomial time learning algorithm that takes as input samples from $\noisy_oracle$, and returns with high probability a hypothesis $h$ such that for any $\epsilon > 0$, $\err_{\mathcal{D}}(h) = \opt + \epsilon$, where $\opt$ is the misclassification error of the optimal classifier.
\end{theorem}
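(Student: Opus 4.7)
The plan is to treat the noisy learning problem as a noiseless CSQ learning problem on a suitably \emph{reweighted} distribution and then invoke the hypothesized CSQ learner as a black box. A direct calculation shows that, for any bounded query function $\phi : \mathcal{X} \mapsto [-1, 1]$, the correlation with the noisy labels satisfies
$$
\E_{(\mathbf{x},y) \sim \mathcal{D}}[y \phi(\mathbf{x})] = \E_{\mathbf{x} \sim \mathcal{D}_{\mathbf{x}}}[(1 - 2\eta(\mathbf{x})) f(\mathbf{x}) \phi(\mathbf{x})].
$$
If $\mathcal{D}'_{\mathbf{x}}$ denotes the marginal with density proportional to $(1-2\eta(\mathbf{x}))\mathcal{D}_{\mathbf{x}}$, normalized by $1-2\opt = \E_{\mathbf{x}}[1-2\eta(\mathbf{x})]$, then the right-hand side equals $(1-2\opt)\, \E_{\mathbf{x} \sim \mathcal{D}'_{\mathbf{x}}}[f(\mathbf{x}) \phi(\mathbf{x})]$. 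Thus each noisy correlational query is, up to the multiplicative factor $1-2\opt$, precisely the noiseless CSQ on $\mathcal{D}'$ that the assumed CSQ learner for the concept class needs.

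Next, I would drive \Cref{algorithm:CSQ_reduction} with empirical estimates of $\E_{\mathcal{D}}[y\phi(\mathbf{x})]$ obtained from the noisy oracle. Standard Hoeffding bounds deliver additive accuracy $\tau'$ from $\poly(1/\tau')$ samples. Since the CSQ learner requires tolerance $\tau$ inverse-polynomial in $\epsilon$ on each query to $\mathcal{D}'$, we must achieve additive error at most $\tau' = (1-2\opt)\tau$ in our estimates of noisy correlations. This is where the polynomially-bounded magnitude hypothesis enters: it forces $1/(1-2\opt)$ to be polynomially bounded (e.g.\ via Jensen's inequality if magnitude is interpreted as $\E[1/(1-2\eta(\mathbf{x}))]$), so the overall sample complexity stays polynomial in $d$ and $1/\epsilon$ after a union bound over the polynomially many queries issued by the learner.

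It remains to translate the disagreement guarantee on $\mathcal{D}'$ back to excess error on $\mathcal{D}$. A standard decomposition of the misclassification probability yields
$$
\err_{\mathcal{D}}(h) - \opt = \E_{\mathbf{x} \sim \mathcal{D}_{\mathbf{x}}}[(1-2\eta(\mathbf{x})) \, \mathds{1}\{h(\mathbf{x}) \neq f(\mathbf{x})\}] = (1-2\opt) \pr_{\mathbf{x} \sim \mathcal{D}'_{\mathbf{x}}}[h(\mathbf{x}) \neq f(\mathbf{x})],
$$
so disagreement at most $\epsilon$ under $\mathcal{D}'$ produces at most $(1-2\opt)\epsilon \leq \epsilon$ excess error on the noisy distribution. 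Because the CSQ learner attains arbitrarily small disagreement, this gives the claimed $\opt + \epsilon$ bound.

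The main technical obstacle is calibrating the tolerance blow-up against the magnitude bound in a way that is robust across the \emph{entire} run of the CSQ learner, rather than a single query. Tsybakov noise permits $\eta(\mathbf{x})$ to approach $1/2$ on a non-negligible mass of points, so one cannot afford pointwise inverse weighting of samples in the spirit of importance sampling; the argument must instead rely entirely on the \emph{average} proximity condition baked into the definition of magnitude, together with the fact that \Cref{algorithm:CSQ_reduction} only ever queries bounded correlations and not ratios that could be inflated by local regions of near-$1/2$ noise.
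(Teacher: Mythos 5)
Your proposal follows essentially the same route as the paper: the same reweighted distribution $\mathcal{D}'_{\mathbf{x}}$ with normalization $Z = 1 - 2\opt = \mathcal{M}^{-1}$, the same correlational identity (the paper's Lemma~\ref{lemma:CSQ_simulation}), the same translation of $\mathcal{D}'$-disagreement into $\opt + \epsilon$ excess error on $\mathcal{D}$ (the paper's Lemma~\ref{lemma:opt_reduction}), and the same reliance on Algorithm~\ref{algorithm:CSQ_reduction} to grid-search the unknown $Z$. One small clarification: no Jensen-type argument is needed to bound $1/(1-2\opt)$, since the paper's Definition~\ref{definition:magnitude} takes magnitude to be exactly $\left(\E_{\mathcal{D}_{\mathbf{x}}}[1-2\eta(\mathbf{x})]\right)^{-1} = 1/(1-2\opt)$, so polynomial magnitude is literally the hypothesis; also be aware that the error analysis must account for both the Hoeffding error in the empirical correlation \emph{and} the discrepancy between the grid guess $\tilde Z$ and $Z$, which is where the grid step $\tau' = \tau/(2C^2)$ comes from.
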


For a formal statement of this theorem we refer to \Cref{theorem:CSQ_learnability}. We combine this result with a sharp upper-bound we derive for the magnitude of the Tsybakov noise (\Cref{lemma:Tsybakov-magnitude}) to obtain the first polynomial time algorithm with arbitrarily small excess error for non-trivial concept classes in the presence of noise with polynomially bounded magnitude. Specifically, our guarantee is applicable for the fundamental class of LTFs, and for any spherically symmetric distribution (e.g. uniform distribution on the unit-sphere) and spherically symmetric Tsybakov noise (\Cref{theorem:Tsybakov}), and follows directly from an evolutionary algorithm developed by \cite{DBLP:conf/colt/KanadeVV10}.

The second part of our work focuses on a more benign noise model. Specifically, we assume access to a stronger oracle $\oracle_eta$ that along with a labeled example $(\mathbf{x}, y)$ returns the flipping probability $\eta(\mathbf{x})$ for this particular point. We stress that this model strictly generalizes the widely-studied and non-trivial model of the random classification noise (RCN) with a known noise rate $\eta$, in which the adversary flips every label with \emph{fixed} probability $\eta$. Kearns' pivotal work (\cite{10.1145/293347.293351}) established that the statistical query framework is robust to the RCN model in the sense that every statistical query can be simulated with high probability even with access to a noisy-RCN oracle (see \Cref{lemma:Kearns-simulation}). Indeed, every concept class efficiently learnable with statistical queries -- which is virtually every PAC learnable class -- admits a polynomial time learning algorithm with $\opt + \epsilon$ misclassification error in the presence of RCN.  

Our key contribution is to substantially strengthen Kearns' result by showing that every SQ learnable concept class is also learnable with access to the oracle $\oracle_eta$ in the presence of any noise model with polynomially-bounded magnitude; informally, we show the following: 

\begin{theorem}
    \label{theorem:SQ_learnability}
    Consider any SQ learnable concept class, and any distribution $\mathcal{D}$ corrupted with label-noise of polynomially bounded magnitude. Then, there exists a polynomial time learning algorithm that takes as input samples from $\oracle_eta$, and returns with high probability a hypothesis $h$ such that for any $\epsilon > 0$, $\err_{\mathcal{D}}(h) = \opt + \epsilon$, where $\opt$ is the misclassification error of the optimal classifier.
\end{theorem}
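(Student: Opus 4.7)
The plan is to generalize Kearns' SQ simulation under RCN to the present setting, where the flipping probability is revealed pointwise through $\oracle_eta$ rather than being a single global constant. Since the concept class is SQ learnable, it suffices to answer each statistical query — an estimate of $\E_{\mathbf{x} \sim \mathcal{D}_{\mathbf{x}}}[\chi(\mathbf{x}, f(\mathbf{x}))]$ to tolerance $\tau$ for a bounded predicate $\chi$ — using $\poly(1/\tau)$ samples from $\oracle_eta$, and then pipe the resulting simulator into any off-the-shelf SQ learner.

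For the simulation step, given a noisy sample $(\mathbf{x}, y)$ together with $\eta(\mathbf{x})$, I would use the reweighted estimator
$$\hat{\chi}(\mathbf{x}, y) \;:=\; \frac{(1 - \eta(\mathbf{x}))\, \chi(\mathbf{x}, y) \;-\; \eta(\mathbf{x})\, \chi(\mathbf{x}, -y)}{1 - 2\eta(\mathbf{x})}.$$
A direct calculation over the randomness of $y \mid \mathbf{x}$ shows $\E[\hat{\chi}(\mathbf{x}, y) \mid \mathbf{x}] = \chi(\mathbf{x}, f(\mathbf{x}))$ pointwise, using only the identity $(1-\eta)^2 - \eta^2 = 1 - 2\eta$; averaging $\hat{\chi}$ over i.i.d.\ draws therefore yields an unbiased estimator of the target SQ. This specializes to Kearns' trick in the case $\eta(\mathbf{x}) \equiv \gamma$, and crucially requires no global upper bound on $\eta$.

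The hard part will be controlling the variance of $\hat{\chi}$: because of the $1/(1-2\eta(\mathbf{x}))$ factor, individual samples can contribute arbitrarily large values on the sub-domain where $\eta$ is close to $1/2$. Using that $|\chi| \le 1$, the second moment is dominated by $\E_{\mathbf{x}}[1/(1-2\eta(\mathbf{x}))^2]$, which is precisely the kind of quantity that the paper's \emph{magnitude} parameter is designed to capture. Under the hypothesis of polynomially bounded magnitude, Chebyshev's inequality (or a Bernstein-type tail bound) shows that $\poly(1/\tau, 1/\delta, M)$ samples suffice to answer a single query within $\tau$ with probability $1 - \delta$, where $M$ bounds the magnitude; a union bound over the polynomially many queries issued by the SQ learner then gives a full simulator with the required guarantees. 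The essential subtlety is verifying that the formal definition of magnitude used by the paper does dominate these second-moment terms and not just first moments — otherwise a single ``bad'' sample with $\eta(\mathbf{x})$ very close to $1/2$ could destroy any high-probability concentration.

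Finally, to convert back into an error bound on the noisy distribution, apply the SQ learner to obtain $h$ with $\pr_{\mathbf{x}}[h(\mathbf{x}) \neq f(\mathbf{x})] \le \epsilon$, and note that
$$\err_{\mathcal{D}}(h) \;=\; \E_{\mathbf{x}}[\eta(\mathbf{x})] \;+\; \E_{\mathbf{x}}\bigl[(1 - 2\eta(\mathbf{x}))\, \mathds{1}[h(\mathbf{x}) \neq f(\mathbf{x})]\bigr] \;\le\; \opt + \epsilon,$$
since the Bayes-optimal predictor is $f$ itself and has $\err_{\mathcal{D}}(f) = \E_{\mathbf{x}}[\eta(\mathbf{x})] = \opt$. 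Putting these pieces together yields a polynomial-time algorithm achieving $\opt + \epsilon$ with high probability, which is the claim.
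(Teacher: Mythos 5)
Your reweighted estimator $\hat{\chi}$ is indeed unbiased (the computation with $(1-\eta)^2 - \eta^2 = 1 - 2\eta$ is fine), and your final conversion from $\pr_{\mathcal{D}_{\mathbf{x}}}[h \neq f] \leq \epsilon$ to $\err_{\mathcal{D}}(h) \leq \opt + \epsilon$ is also correct. But the ``essential subtlety'' you flag at the end is in fact a genuine gap, and it does not resolve in your favor. The magnitude is defined as $\mathcal{M} = (\E_{\mathbf{x}}[1-2\eta(\mathbf{x})])^{-1}$; this bounds the reciprocal of a \emph{first} moment of $1-2\eta$, and it simply does not control $\E_{\mathbf{x}}[1/(1-2\eta(\mathbf{x}))]$, let alone $\E_{\mathbf{x}}[1/(1-2\eta(\mathbf{x}))^2]$. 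Concretely, take $\eta(\mathbf{x}) = 1/2 - \rho$ on half the mass and $\eta(\mathbf{x}) = 0$ on the other half: then $\E[1-2\eta] = \rho + 1/2 \geq 1/2$, so $\mathcal{M} \leq 2$ uniformly, yet $\E[1/(1-2\eta)^2] \geq 1/(8\rho^2) \to \infty$ as $\rho \to 0$. So your estimator has unbounded variance, and even unbounded first absolute moment, under a polynomially bounded magnitude hypothesis. Chebyshev or Bernstein cannot close this. Truncating $\hat{\chi}$ on the region where $1-2\eta(\mathbf{x})$ is small does not help either, because the same example shows that $\pr[1-2\eta(\mathbf{x}) < T]$ can stay at $1/2$ for every $T>0$ while $\mathcal{M}$ remains bounded, so the bias of the truncated estimator is not controlled by $\mathcal{M}$.

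The paper avoids this precisely by \emph{not} dividing by $1-2\eta(\mathbf{x})$ pointwise. Instead it reweights the marginal: it simulates the SQ learner on $\mathcal{D}_{\mathbf{x}}'(\mathbf{x}) \propto (1-2\eta(\mathbf{x}))\mathcal{D}_{\mathbf{x}}(\mathbf{x})$, and decomposes each statistical query into a correlational part and a target-independent part (\Cref{lemma:decomposition}). For the correlational piece, $\E_{\mathcal{D}_{\mathbf{x}}'}[\phi(\mathbf{x})f(\mathbf{x})] = \frac{1}{Z}\E_{\mathcal{D}}[\phi(\mathbf{x})y]$, and for the target-independent piece, $\E_{\mathcal{D}_{\mathbf{x}}'}[\phi(\mathbf{x})] = \frac{1}{Z}\E_{\mathcal{D}_{\mathbf{x}}}[\phi(\mathbf{x})(1-2\eta(\mathbf{x}))]$. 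In both identities the random quantity being averaged is bounded by $1$, and the only division is by the single scalar $Z = \mathcal{M}^{-1} \geq 1/C$, which the oracle $\oracle_eta$ lets you estimate directly. This is exactly the bounded magnitude assumption. The learner's guarantee is then with respect to $\mathcal{D}_{\mathbf{x}}'$, and \Cref{lemma:opt_reduction} converts $\pr_{\mathcal{D}_{\mathbf{x}}'}[h \neq f] \leq \epsilon$ into $\err_{\mathcal{D}}(h) \leq \opt + \epsilon$. If you want to salvage your route, you would need a strictly stronger noise assumption, e.g.\ a polynomial bound on $\E_{\mathbf{x}}[1/(1-2\eta(\mathbf{x}))^2]$, which is strictly stronger than polynomially bounded magnitude.
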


We refer to \Cref{theorem:SQ_learnability} for a more precise statement. In our simulation (\Cref{algorithm:SQ_reduction}), we build on ideas from \cite{DBLP:journals/corr/abs-2006-04787}, and a statistical query decomposition lemma due to \cite{10.1162/153244302760200669}. Despite the many practical applications that motivate having access to the uncertainty of each given label, we consider our contribution to be mostly of theoretical significance. Indeed, one possible interpretation of \Cref{theorem:SQ_learnability} is that the crux of the label noise is not the variance itself, but rather the uncertainty on how the noise is distributed. We remark that even if the learner had access to the noise function, the underlying optimization problem is non-convex; see \cite{DBLP:conf/nips/DiakonikolasGT19}.

\section{Preliminaries}
\label{section:preliminaries}
Throughout this work, we denote with $\mathcal{X} \subseteq \mathbb{R}^d$ the instance space -- or the domain of the samples, while we focus solely on the binary classification problem, i.e. the label space $\mathcal{Y}$ is simply the binary set $\{ \pm 1\}$. A hypothesis is a polynomial time computable function $h : \mathcal{X} \mapsto \{ \pm 1\}$. We will use $h^*$ to represent the Bayesian-optimal classifier, which remains invariant when the noise function satisfies $\eta(\mathbf{x}) < \frac{1}{2}$. The misclassification error of a hypothesis $h$ with respect to distribution $\mathcal{D}$ is defined as $\err_{\mathcal{D}}(h) = \pr_{(\mathbf{x}, y) \sim \mathcal{D}}[h(\mathbf{x}) \neq y]$, while $\err_{\mathcal{D}}(h^*) = \E_{\mathcal{D}_{\mathbf{x}}}[\eta(\mathbf{x})]$. A \emph{weak learning} algorithm produces a hypothesis $h$ such that $\err_{\mathcal{D}}(h) \leq \frac{1}{2} - \frac{1}{p(s)}$, for some fixed polynomial $p(s)$.

Linear threshold functions (LTFs) are Boolean functions $f : \mathbb{R}^d \mapsto \{ \pm 1\}$ of the form $f(\mathbf{x}) = \sign (\langle \mathbf{w}, \mathbf{x} \rangle - \theta)\footnote{recall that function $\sign : \mathbb{R} \mapsto \{ \pm 1 \}$ is defined as $\sign(u) = 1$ for $u \geq 0$ and $\sign(u) = -1$ otherwise.}$, where $\mathbf{w} \in \mathbb{R}^d$ is the weight vector and $\theta \in \mathbb{R}$ is the threshold. We assume -- without any loss of generality -- that the LTF is \emph{homogeneous}, i.e. $\theta = 0$.

If $f$ represents the target function, $\eta$ the noise function, and $\mathcal{D}_{\mathbf{x}}$ the marginal distribution over the instance space, we are using the following notation:

\begin{itemize}
    \item \emph{noiseless} oracle: $\noiseless_oracle(f, \mathcal{D}_{\mathbf{x}})$ returns a labeled example $(\mathbf{x}, y)$, where $\mathbf{x} \sim \mathcal{D}_{\mathbf{x}}$ and $y = f(\mathbf{x})$.
    \item \emph{noisy} oracle: $\noisy_oracle(f, \mathcal{D}_{\mathbf{x}}, \eta)$ returns a noisy labeled example corrupted with noise function $\eta$, as in \Cref{definition:noisy_oracle}. 
    \item \emph{extended noisy} oracle: $\oracle_eta(f, \mathcal{D}_{\mathbf{x}}, \eta)$ returns a noisy labeled example $(\mathbf{x}, y)$ along with the flipping probability at this particular point $\eta(\mathbf{x})$.
\end{itemize}

\subsection{Statistical Query Learning}

Here we provide some basic definitions from the statistical query framework.

\begin{definition}[Statistical Query Model \cite{10.1145/293347.293351}]
    Let $f$ be an unknown target function in a class of Boolean functions $\mathcal{C}$ over $\mathcal{X}$. In the statistical query model the learner interacts with an oracle $\stat(f, \mathcal{D}_{\mathbf{x}})$ that replaces the standard examples oracle $\noiseless_oracle(f, \mathcal{D}_{\mathbf{x}})$. Specifically, $\stat(f, \mathcal{D}_{\mathbf{x}})$ takes as input a statistical query of the form $(\psi, \tau)$, where $\psi: \mathcal{X} \times \{\pm 1 \} \mapsto [-1,1]$ and $\tau \in [0,1]$ the tolerance parameter, and returns any number $v$ such that $|\E_{\mathcal{D}_{\mathbf{x}}} [\psi(\mathbf{x}, f(\mathbf{x}))] - v | \leq \tau$.
\end{definition}

\begin{definition}[Correlation Statistical Queries \cite{10.1162/153244302760200669}]
    A correlational statistical query (CSQ) is a statistical query for a correlation of a function over $\mathcal{X}$ with the target function, namely $\psi(\mathbf{x}, f(\mathbf{x})) = \phi(\mathbf{x}) \cdot f(\mathbf{x})$ for some function $\phi: \mathcal{X} \mapsto [-1,1]$. 
\end{definition}

\begin{definition}[Learning from Statistical Queries]
    A concept class $\mathcal{C}$ over $\mathcal{X}$ is said to be SQ learnable if there exists an algorithm such that for any target function $f \in \mathcal{C}$ and any distribution $\mathcal{D}_{\mathbf{x}}$ in $\mathcal{X}$, it outputs a hypothesis $h$ with $\mathbf{Pr}_{\mathcal{D}_{\mathbf{x}}}[h(\mathbf{x}) \neq f(\mathbf{x})] \leq \epsilon$, for any $\epsilon > 0$, using $\poly(d, 1/\epsilon)$ number of SQ queries and tolerance $1/\poly(d, 1/\epsilon)$. Furthermore, we say that a concept class $\mathcal{C}$ is CSQ learnable if it is SQ learnable with access only to correlational statistical queries. 
\end{definition}

\subsection{Evolvability and CSQ Learnability}

Valiant's model of evolvability is a constrained form of PAC learning, and has been established as a framework for analyzing the computational capabilities of evolutionary processes through sequences of random mutations guided by natural selection \cite{DBLP:journals/jacm/Valiant09}. Providing a formal definition of evolvability would go beyond the scope of our work, and instead, the following very elegant characterization -- implying equivalence between evolvability and CSQ learnability -- will suffice.

\begin{theorem}[\cite{DBLP:conf/stoc/Feldman08}, Theorem 1.1]
    Let $\mathcal{C}$ be a concept class CSQ learnable by an algorithm $\mathcal{A}$ over a class of distributions $\mathcal{D}$. Then, there exists an evolutionary algorithm $N(\mathcal{A})$ such that $\mathcal{C}$ is evolvable by $N(\mathcal{A})$ over $\mathcal{D}$.
\end{theorem}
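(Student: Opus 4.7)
The plan is to construct an evolutionary algorithm $N(\mathcal{A})$ whose successive generations simulate, one-by-one, the correlational statistical queries issued by $\mathcal{A}$. The starting observation is that Valiant's evolutionary performance functional, $\mathrm{Perf}_f(h) = \E_{\mathbf{x} \sim \mathcal{D}_{\mathbf{x}}}[h(\mathbf{x}) f(\mathbf{x})]$, is itself a correlational quantity, so the gap in empirical performance between two candidate hypotheses is already an empirical estimate of a CSQ on their difference. Consequently, an evolutionary selection step \emph{is} a CSQ oracle in disguise, and the whole task reduces to packaging $\mathcal{A}$'s execution into a sequence of such selections.

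Concretely, I would enlarge the representation class of $N(\mathcal{A})$ to consist of pairs of a Boolean hypothesis and a bookkeeping tag encoding the prefix of already-committed CSQ answers together with the next query $(\phi_i, \tau_i)$ that $\mathcal{A}$ is about to ask. At generation $i$, the mutator would emit a neighborhood of size $O(1/\tau_i)$ whose members are indexed by candidate answers $v$ drawn from a $\tau_i/2$-net of $[-1,1]$; each such candidate writes $v$ into the tag and installs a new Boolean hypothesis $h_v$ crafted so that $\mathrm{Perf}_f(h_v)$ is maximized precisely when $v$ is closest to $\E[\phi_i(\mathbf{x}) f(\mathbf{x})]$. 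Selection then records the best $v$ into the tag and the bookkeeping advances $\mathcal{A}$ to its next query.

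The hard part will be realizing $h_v$ as an efficiently computable, Boolean-valued function whose performance has the desired shape; a clean route is to design $h_v$ so that
\[
\mathrm{Perf}_f(h_v) = c - \lambda \bigl(v - \E[\phi_i(\mathbf{x}) f(\mathbf{x})]\bigr)^2,
\]
for universal constants $c, \lambda > 0$, e.g.\ by combining $\phi_i$ with a $v$-biased internal coin flip (permitted by the randomized variant of Valiant's model). With such a quadratic profile, any $v$ at distance more than $\tau_i$ from the true CSQ value loses at least $\Omega(\tau_i^2)$ performance against the best candidate in the net, and this margin is reliably detected by empirical averaging over polynomially many samples.

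Finally, I would chain the $q = \poly(d, 1/\epsilon)$ generations into a single evolutionary schedule: at each step set $\tau_i$ and the per-generation sample size so that selection errs with probability at most $1/(2q)$, and union-bound over all generations. On the high-probability event that every selection step returns an answer within tolerance $\tau_i$, the tag carried by the final representation encodes precisely the transcript $\mathcal{A}$ would have produced from a faithful CSQ oracle, and hence a hypothesis of error at most $\epsilon$. All intermediate quantities — neighborhood size, representation complexity, and sample cost — remain polynomial in $d$ and $1/\epsilon$, so $N(\mathcal{A})$ meets Valiant's evolvability criteria over the same class of distributions on which $\mathcal{A}$ was assumed to succeed.
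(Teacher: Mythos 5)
The paper cites this theorem from Feldman (STOC 2008) without reproducing a proof, so there is no in-paper argument to line your attempt up against; I can only evaluate your construction directly. Your high-level plan --- enlarge the representation to carry a transcript of $\mathcal{A}$'s CSQ answers, and let each generation resolve one query by selection over a net of candidate values --- is the same framework Feldman uses, so the outline is on the right track. The concrete gap is the claim that one can build a (randomized) Boolean $h_v$ with
\[
\mathrm{Perf}_f(h_v) = c - \lambda\bigl(v - \E[\phi_i(\mathbf{x}) f(\mathbf{x})]\bigr)^2
\]
for \emph{universal} constants $c, \lambda > 0$. Since $\mathrm{Perf}_f(h) = \E_{\mathbf{x}}\bigl[\E[h(\mathbf{x})\mid\mathbf{x}]\,f(\mathbf{x})\bigr]$ is linear in the conditional law of the hypothesis, any $v$-parameterized family has performance of the form $\alpha(f) + \beta(f)\,v + \gamma(f)\,v^2$, where each coefficient is an $f$-correlation $\E_{\mathbf{x}}[g_j(\mathbf{x})\,f(\mathbf{x})]$ for some function $g_j$ that you get to design. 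The quadratic coefficient $\gamma(f)$ therefore flips sign when $f$ is replaced by $-f$ and vanishes whenever $g_2$ is uncorrelated with $f$ (e.g.\ $g_2$ constant and $\E[f]=0$), so it cannot be pinned to a fixed negative $-\lambda$. Concretely, mixing a $\phi_i$-randomization with a $v$-biased coin yields something like $\mathrm{Perf}_f(h_v) = v\,\E[\phi_i f] - \kappa\,v^2\,\E[f]$, whose maximizer is $\E[\phi_i f]/(2\kappa\E[f])$ --- not $\E[\phi_i f]$ --- and which degenerates to a line when $\E[f]=0$, in which case selection over the net picks $v = \pm 1$ rather than anything close to the true CSQ value.

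Feldman's argument avoids this obstacle because it never asks a selection step to locate a CSQ value on an absolute scale. Instead, general CSQs are reduced to a sequence of threshold (sign) comparisons, and each such comparison is answered by checking whether a $\phi$-perturbed variant of the current hypothesis out-performs the unperturbed one --- a purely relative test, matching what an evolutionary selection step actually observes. Replacing your ``shape the performance landscape to peak at $a$'' step with such pairwise performance comparisons is what makes the simulation go through.
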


\begin{theorem}[\cite{DBLP:conf/stoc/Feldman08}, Theorem 4.1]
    If a concept class $\mathcal{C}$ is evolvable over a class of distributions $\mathcal{D}$, then $\mathcal{C}$ is learnable with correlational statistical queries over $\mathcal{D}$.
\end{theorem}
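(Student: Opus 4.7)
The plan is to show that an evolutionary algorithm can be faithfully simulated step-by-step using only correlational statistical queries. The crucial observation is that in Valiant's evolvability model, the learner's only interaction with the target function $f$ is through evaluating the \emph{performance} of candidate hypotheses, where the performance of a hypothesis $h$ relative to $f$ under distribution $\mathcal{D}_{\mathbf{x}}$ is $\mathrm{Perf}_f(h) = \E_{\mathcal{D}_{\mathbf{x}}}[h(\mathbf{x}) f(\mathbf{x})]$. This is precisely a correlation between the hypothesis and the target, and thus exactly the quantity that a correlational statistical query can return.

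First I would formalize the simulation. At each generation $t$, the evolutionary algorithm maintains a current representation $r_t$, invokes a polynomial-size neighborhood/mutator $N(r_t)$, and estimates $\mathrm{Perf}_f(r')$ for each $r' \in N(r_t)$ to some tolerance $\tau$ using samples drawn from $\noiseless_oracle(f, \mathcal{D}_{\mathbf{x}})$. Each such estimate is replaced by a CSQ query $(\phi_{r'}, \tau)$ with $\phi_{r'}(\mathbf{x}) = r'(\mathbf{x})$, which returns a value $v$ satisfying $|v - \E_{\mathcal{D}_{\mathbf{x}}}[r'(\mathbf{x}) f(\mathbf{x})]| \leq \tau$. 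The selection mechanism, which partitions mutations into beneficial, neutral, and deleterious according to thresholds on their performance relative to $r_t$, depends only on these scalar values and so can be executed verbatim on the CSQ responses.

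Next I would verify the complexity bookkeeping: since an evolutionary algorithm by definition runs for $\poly(d, 1/\epsilon)$ generations, explores a mutator of size $\poly(d, 1/\epsilon)$, and requires performance estimates of inverse-polynomial accuracy, the resulting CSQ simulation issues $\poly(d, 1/\epsilon)$ queries of tolerance $1/\poly(d, 1/\epsilon)$. Consequently the hypothesis ultimately returned satisfies $\pr_{\mathcal{D}_{\mathbf{x}}}[h(\mathbf{x}) \neq f(\mathbf{x})] \leq \epsilon$ with high probability, which is exactly CSQ learnability.

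The main obstacle is the selection step: a mutation whose true performance lies exactly at the boundary between ``beneficial'' and ``neutral'' could be misclassified by a $\tau$-approximate CSQ response, causing the simulated evolutionary trajectory to diverge from any legitimate one. I would resolve this by taking the CSQ tolerance strictly smaller than the ``gap'' parameter $t(1/\epsilon)$ prescribed by the evolutionary algorithm itself; with this choice, any mutation whose true performance lies within the $\tau$-slab of a threshold can be classified either way without leaving the set of valid evolutionary trajectories, so the simulation produces an admissible evolution and inherits its convergence guarantees. A secondary point is the possibly randomized mutator, which poses no difficulty since CSQ learnability admits randomized algorithms and fresh randomness can be sampled by the simulator independently of the oracle.
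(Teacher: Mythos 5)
The paper does not reprove this theorem --- it cites it as Feldman (STOC 2008, Theorem 4.1) --- so there is no paper-internal proof to compare against. Your approach is the standard one and matches the substance of Feldman's argument: in Valiant's evolvability model the only channel through which the algorithm interacts with the target $f$ is the performance $\mathrm{Perf}_f(r') = \E_{\mathcal{D}_{\mathbf{x}}}[r'(\mathbf{x}) f(\mathbf{x})]$ of candidate representations, and this is literally a correlational statistical query, so the entire evolutionary process can be replayed generation by generation, replacing empirical performance estimates by CSQ responses, and the complexity bookkeeping goes through.

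Where the argument is not tight is exactly the spot you flag as ``the main obstacle.'' You write that mutations in the $\tau$-slab of a threshold ``can be classified either way without leaving the set of valid evolutionary trajectories, so the simulation produces an admissible evolution and inherits its convergence guarantees.'' This last step does not follow. Valiant's model guarantees that the evolution reaches high performance with high probability over the \emph{random} empirical estimates, not for \emph{every} sequence of estimates that happens to respect the tolerance. The CSQ oracle is adversarial and can steer each near-threshold decision consistently in whichever direction is least favorable; an ``admissible'' trajectory in your sense could therefore be one of the rare failing ones. Closing this requires an additional idea --- for instance, having the simulator inject fresh randomness so that near-threshold decisions are resolved stochastically rather than adversarially, or a more careful re-examination of the evolvability definition (using that it must hold for every polynomially small failure probability and from every initial state) to show that no tolerance-respecting trajectory can fail. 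As written, the claim that the convergence guarantee is ``inherited'' is asserted but not established.
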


\subsection{Useful Tools}

For some of our proofs, we employ the following standard bound:

\begin{theorem}[Hoeffding's inequality (\cite{10.2307/2282952})]
    \label{theorem:hoeffding}
    Let $Z_1, \dots, Z_n$ be independent random variables with $Z_i \in [a,b]$, for all $i \in [n]$. Then, for all $\epsilon > 0$,
    \begin{equation}
        \pr \left[ \left\lvert \frac{1}{n} \sum_{i=1}^n (Z_i - \E[Z_i]) \right\rvert \geq \epsilon \right] \leq 2 \exp\left( - \frac{2n \epsilon^2}{(b-a)^2} \right).
    \end{equation}
\end{theorem}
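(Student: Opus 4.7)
The plan is to prove Hoeffding's inequality via the standard Chernoff-bounding (exponential moment) approach. Let $S_n = \sum_{i=1}^n (Z_i - \E[Z_i])$. It suffices to bound the one-sided tail $\pr[S_n \geq n\epsilon]$; the symmetric bound for $\pr[S_n \leq -n\epsilon]$ follows by applying the same argument to $-Z_i$, and a union bound produces the factor of $2$ in the statement. For the upper tail, I would introduce a free parameter $s > 0$ and apply Markov's inequality to $e^{sS_n}$, using independence to factor the MGF:
\begin{equation}
    \pr[S_n \geq n\epsilon] \leq e^{-sn\epsilon} \, \E[e^{sS_n}] = e^{-sn\epsilon} \prod_{i=1}^n \E[e^{s(Z_i - \E[Z_i])}].
\end{equation}

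The key technical ingredient is Hoeffding's lemma: for any random variable $X \in [a,b]$ with $\E[X] = 0$, one has $\E[e^{sX}] \leq e^{s^2(b-a)^2/8}$. I would prove this by exploiting convexity of the exponential on $[a,b]$, writing $e^{sx} \leq \frac{b-x}{b-a} e^{sa} + \frac{x-a}{b-a} e^{sb}$, and then taking expectations (the linear term vanishes since $\E[X]=0$). After reparameterizing $u = s(b-a)$, the problem reduces to showing $\log \E[e^{sX}] \leq u^2/8$, which follows by bounding the second derivative of the cumulant generating function by $(b-a)^2/4$ via a Taylor remainder argument.

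Combining the lemma with the product bound gives
\begin{equation}
    \pr[S_n \geq n\epsilon] \leq \exp\!\left( -sn\epsilon + \frac{n s^2 (b-a)^2}{8} \right),
\end{equation}
and optimizing over $s > 0$ with the choice $s = 4\epsilon/(b-a)^2$ yields $\exp(-2n\epsilon^2/(b-a)^2)$. Adding the two symmetric tail bounds completes the proof.

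The main obstacle is obtaining the sharp constant $1/8$ in Hoeffding's lemma, since a naive bound via $e^{sX} \leq 1 + sX + \tfrac{1}{2}s^2 X^2 e^{s|X|}$ gives a weaker constant. The careful argument requires analyzing the cumulant $\varphi(s) = \log \E[e^{sX}]$ and verifying $\varphi''(s) \leq (b-a)^2/4$ uniformly, which then integrates to the desired Gaussian-type bound. Everything else — the reduction from two-sided to one-sided tails, the independence step, and the optimization in $s$ — is essentially routine bookkeeping.
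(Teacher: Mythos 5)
The paper does not prove this statement; it cites Hoeffding's inequality directly from the literature (Hoeffding, 1963) and uses it as a black-box tool, so there is no paper proof to compare against. Your proposal is the standard and correct Chernoff-bound proof: reduce to a one-sided tail, apply Markov's inequality to $e^{sS_n}$, factor the moment generating function by independence, invoke Hoeffding's lemma $\E[e^{sX}] \leq e^{s^2(b-a)^2/8}$ for a bounded zero-mean $X$, and optimize over $s$ (your choice $s = 4\epsilon/(b-a)^2$ indeed yields the exponent $-2n\epsilon^2/(b-a)^2$). Your outline of Hoeffding's lemma via convexity of $x \mapsto e^{sx}$ on $[a,b]$ and bounding $\varphi''(s) \leq (b-a)^2/4$ is also the standard route to the sharp constant. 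The only small caveat is that the sketch of the cumulant second-derivative bound is stated but not carried out; that calculation (interpreting $\varphi''$ as a variance of a random variable supported on $[a,b]$ under a tilted measure, hence at most $(b-a)^2/4$) would need to be written out in a fully self-contained proof, but as a sketch it is accurate and complete in structure.
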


\section{Magnitude of the Noise}

This section introduces the \emph{magnitude} of the noise, a parameter that will allow us to analyze both \Cref{algorithm:CSQ_reduction} and \Cref{algorithm:SQ_reduction} for a broad class of noise models with a unifying treatment. We also derive a distribution-independent upper-bound on the magnitude of the Tsybakov noise. 

\begin{definition}
    \label{definition:magnitude}
Let $\mathcal{D}_{\mathbf{x}}$ be a distribution over $\mathcal{X}$. We define the magnitude of a noise function $\eta : \mathcal{X} \mapsto \left[0, \frac{1}{2} \right)$ with respect to distribution $\mathcal{D}_{\mathbf{x}}$ as 

\begin{equation}
    \mathcal{M} = \left( \E_{\mathbf{x} \sim \mathcal{D}_{\mathbf{x}}} \left[ (1 -2\eta(\mathbf{x})) \right] \right)^{-1}.
\end{equation}
\end{definition}

We will always use the magnitude of the noise with respect to the underlying distribution $\mathcal{D}_{\mathbf{x}}$ (and so we may simply say the magnitude of the noise). This parameter reflects how close is the noise on average to the barrier of $\frac{1}{2}$. Nonetheless, we should point out that the difficulty of the instance is not necessarily captured by the magnitude; e.g., a high magnitude RCN instance would be more easily handled than a Massart noise instance with more modest magnitude, mainly due to the symmetry of the former model. 

\paragraph{Super-Polynomial Magnitude} All the guarantees we establish throughout this work are applicable when the magnitude of the noise is polynomially-bounded. In contrast, we point out that a super-polynomial magnitude precludes even the possibility of a weak learner. Indeed, given that $\mathcal{M}^{-1} = \E_{\mathcal{D}_{\mathbf{x}}}[(1 - 2\eta(\mathbf{x}))] = 1 - 2\opt$, it follows that 

\begin{equation}
 \err_{\mathcal{D}}(h^*) = \frac{1}{2} - \frac{1}{2\mathcal{M}}.   
\end{equation}

As a result, if the magnitude of the noise with respect to the underlying distribution is super-polynomial, even the Bayesian-optimal classifier is \emph{not} a weak learner. 

In the following lemma we provide a distribution-free upper-bound on the magnitude of the Tsybakov noise with respect to the parameters of the model. Note that we assume -- without any loss of generality -- that $t_0$ is such that $A t_0^{\frac{\alpha}{1 - \alpha}} \leq 1$.

\begin{lemma}
    \label{lemma:Tsybakov-magnitude}
The magnitude $\mathcal{M}$ of an $(\alpha, A, t_0)$-Tsybakov noise with respect to any distribution $\mathcal{D}_{\mathbf{x}}$ can be upper-bounded as 

\[
    \mathcal{M} \leq \left\{\begin{array}{lr}
        \frac{1}{2\alpha} \left( \frac{A}{1 - \alpha} \right)^{\frac{1 - \alpha}{\alpha}}  & \text{if } t^* \leq t_0, \\ \\
        \left\{ 2t_0\left(1 - At_0^{\frac{\alpha}{1 - \alpha}}\right) \right\}^{-1} & \text{if } t^* > t_0, \\
        \end{array}\right.
  \]
  where

\begin{equation}
    t^* = \left( \frac{1 - \alpha}{A} \right)^{\frac{1 - \alpha}{\alpha}}.
\end{equation}
\end{lemma}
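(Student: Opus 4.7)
The plan is to work directly from the definition $\mathcal{M}^{-1} = \E_{\mathbf{x} \sim \mathcal{D}_{\mathbf{x}}}[1 - 2\eta(\mathbf{x})]$ and produce a lower bound on this expectation, then invert. Writing $Z = 1 - 2\eta(\mathbf{x}) \in (0,1]$, the Tsybakov condition rephrased says
\[
\pr_{\mathbf{x}}[Z \leq 2t] = \pr_{\mathbf{x}}\!\left[\eta(\mathbf{x}) \geq \tfrac{1}{2} - t\right] \leq A t^{\alpha/(1-\alpha)}
\]
for every $t \in [0, t_0]$. The natural starting lower bound is therefore the one-point Markov-style inequality: for every $t \in [0, t_0]$,
\[
\E[Z] \;\geq\; 2t \cdot \pr[Z > 2t] \;\geq\; 2t\bigl(1 - A t^{\alpha/(1-\alpha)}\bigr).
\]
This gives a family of lower bounds parameterized by $t$, and the task reduces to optimizing the right-hand side.

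Next I would differentiate the function $g(t) = 2t(1 - A t^{\alpha/(1-\alpha)})$ and set $g'(t)=0$. The computation $g'(t) = 2 - 2A\cdot\frac{1}{1-\alpha}\,t^{\alpha/(1-\alpha)}$ yields the unique critical point
\[
t^* = \left(\frac{1-\alpha}{A}\right)^{(1-\alpha)/\alpha},
\]
which is exactly the quantity defined in the statement. A quick second-derivative (or monotonicity) check shows $g$ is increasing on $[0, t^*]$ and decreasing afterwards, so the maximum of $g$ on $[0, t_0]$ depends on whether $t^* \leq t_0$ or not.

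The argument then splits into the two cases of the lemma. If $t^* \leq t_0$, I substitute $t^*$ into $g$. Using the identity $A (t^*)^{\alpha/(1-\alpha)} = 1 - \alpha$, this cleanly collapses to $g(t^*) = 2\alpha\, t^*$, yielding $\mathcal{M} \leq 1/(2\alpha t^*)$, which matches the first case upon expanding $t^*$. If $t^* > t_0$, the function $g$ is still increasing on $[0, t_0]$, so its maximum over the admissible range is $g(t_0) = 2t_0(1 - A t_0^{\alpha/(1-\alpha)})$, yielding the second case directly.

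I do not expect any real obstacle in this proof; the only subtlety is remembering that the Tsybakov bound is only assumed for $t \leq t_0$, which is precisely why the case split on $t^*$ versus $t_0$ is needed — we cannot simply evaluate at the unconstrained optimizer. The convention $A t_0^{\alpha/(1-\alpha)} \leq 1$ stated just before the lemma ensures that $g(t_0) \geq 0$ so that the bound in the second case is meaningful.
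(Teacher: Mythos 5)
Your proposal is correct and is essentially the paper's own argument: the "one-point Markov" bound $\E[Z] \geq 2t\,\pr[Z>2t] \geq 2t(1-At^{\alpha/(1-\alpha)})$ is exactly the paper's partition of $\mathcal{X}$ into $\mathcal{X}_{good}$ (where $\eta \leq 1/2 - t$) and $\mathcal{X}_{bad}$, just phrased in terms of the random variable $Z = 1-2\eta(\mathbf{x})$ rather than a region decomposition. The optimization of $g(t)=2t(1-At^{\alpha/(1-\alpha)})$ over $t \in [0,t_0]$, the critical point $t^*$, and the case split are identical to the paper.
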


\begin{proof}
Consider some $t \in [0, t_0]$, and let $\mathcal{D}_{\mathbf{x}}$ denote the marginal distribution on the unlabeled points. By definition of the Tsybakov noise condition, the instance space $\mathcal{X}$ may be partitioned into regions $\mathcal{X}_{good}$ and $\mathcal{X}_{bad}$ such that 

\begin{itemize}
    \item $\pr_{\mathbf{x} \sim \mathcal{D}_{\mathbf{x}}}[\mathbf{x} \in \mathcal{X}_{good}] \geq 1 - A t^{\frac{\alpha}{1 - \alpha}}$, and $\eta(\mathbf{x}) \leq \frac{1}{2} - t$ almost surely for all $\mathbf{x} \in \mathcal{X}_{good}$. The points in $\mathcal{X}_{good}$ should be thought of as being corrupted with Massart noise;
    \item $\pr_{\mathbf{x} \sim \mathcal{D}_{\mathbf{x}}}[\mathbf{x} \in \mathcal{X}_{bad}] \leq A t^{\frac{\alpha}{1 - \alpha}}$. The points in $\mathcal{X}_{bad}$ may have flipping probabilities arbitrarily close to $1/2$.
\end{itemize}
As a result, it follows that 

\begin{align}
    \int_{\mathcal{X}} (1 - 2\eta(\mathbf{x})) \mathcal{D}_{\mathbf{x}}(\mathbf{x}) d \mathbf{x} &= \int_{\mathcal{X}_{good}} (1 - 2\eta(\mathbf{x})) \mathcal{D}_{\mathbf{x}}(\mathbf{x}) d \mathbf{x} + \overbrace{\int_{\mathcal{X}_{bad}} (1 - 2\eta(\mathbf{x})) \mathcal{D}_{\mathbf{x}}(\mathbf{x}) d \mathbf{x}}^{> 0} \\
    &> 2t \int_{\mathcal{X}_{good}} \mathcal{D}_{\mathbf{x}}(\mathbf{x}) d \mathbf{x} \\
    &\geq 2t (1 - At^{\frac{\alpha}{1 - \alpha}}),
\end{align}
where in the first line we used that $\eta(\mathbf{x}) < \frac{1}{2}$ for all $\mathbf{x} \in \mathcal{X}_{bad}$ and $\eta(\mathbf{x}) \leq \frac{1}{2} - t$ for all $\mathbf{x} \in \mathcal{X}_{good}$. As a result, we obtain that 

\begin{equation}
    \mathcal{M}^{-1} \geq \sup_{t \in [0, t_0]} \left\{ 2t (1 - At^{\frac{\alpha}{1 - \alpha}}) \right\}.
\end{equation}
Finally, it is easy to verify that 

\[
     \sup_{t \in [0, t_0]} \left\{ 2t (1 - At^{\frac{\alpha}{1 - \alpha}}) \right\} = \left\{\begin{array}{lr}
        2\alpha \left( \frac{1 - \alpha}{A} \right)^{\frac{1 - \alpha}{\alpha}} & \text{if } t^* \leq t_0, \\
        2t_0\left(1 - At_0^{\frac{\alpha}{1 - \alpha}}\right) & \text{if } t^* > t_0. \\
        \end{array}\right.
  \]
  We should mention that when $t^* > t_0$, it follows that  $A t_0^{\frac{\alpha}{1 - \alpha}} \neq 1$. 
\end{proof}

As a special case of this lemma, note that the magnitude of a $\gamma$-Massart noise is upper-bounded by $1/(1 - 2\gamma)$, with the bound being tight for the special case of RCN.

\paragraph{Remark} Throughout this work we endeavor to minimize the misclassification error of a hypothesis $h$ with respect to the \emph{noisy} distribution $\mathcal{D}$, i.e. attain $\err_{\mathcal{D}}(h) = \opt + \epsilon$, for any $\epsilon$. However, one could ask how would such a guarantee translate in the underlying noiseless or realizable instance; in other words, the question is whether having a hypothesis $h$ such that $\err_{\mathcal{D}}(h) \leq \opt + \epsilon$ implies that $\pr_{\mathcal{D}_{\mathbf{x}}}[h(\mathbf{x}) \neq f(\mathbf{x})] \leq \epsilon' $, for some $\epsilon'$ that depends polynomially on $\epsilon$. In the Massart as well as the Tsybakov model this is indeed the case, although it does not hold for some noise functions with polynomially bounded magnitude; we refer the reader to \Cref{appendix:realizable} for additional discussion.

\section{CSQ Learnability Implies Noise Tolerance}

In this section we analyze an algorithm devised by \cite{DBLP:journals/corr/abs-2006-04787}. Specifically, we extend their analysis in the presence of any noise model with polynomially-bounded magnitude, while they only provided an analysis for the Massart noise model. Their main insight was to consider an artificial noiseless classification problem on a distribution $\mathcal{D}_{\mathbf{x}}'$ transformed according to the noise function. More precisely, 

\begin{equation}
    \label{equaiton:transformed_dist}
    \mathcal{D}_{\mathbf{x}}'(\mathbf{x}) = \frac{1}{Z} \mathcal{D}_{\mathbf{x}}(\mathbf{x})(1 - 2\eta(\mathbf{x})),
\end{equation}
where $Z$ here serves as a normalization constant; notice that $Z = \mathcal{M}^{-1}$. Interestingly, this artificial classification problem transfers the noise from the label space to the instance space. In this way, it is connected with several other noise models in which the adversary perturbs the distribution over the instance space; e.g., see \cite{DBLP:conf/alt/BshoutyEK98}. The first observation is that solving the artificial noiseless problem suffices. 

\begin{lemma}
    \label{lemma:opt_reduction}
    Consider some target function $f: \mathcal{X} \mapsto \{\pm 1\}$, a distribution $\mathcal{D}_{\mathbf{x}}$ over $\mathcal{X}$, and $\mathcal{D}_{\mathbf{x}}'$ as defined in \eqref{equaiton:transformed_dist}. If $h$ is a hypothesis such that $\mathbf{Pr}_{\mathbf{x} \sim \mathcal{D}_{\mathbf{x}}'}[h(\mathbf{x}) \neq f(\mathbf{x})] \leq \epsilon$, it follows that $\mathbf{Pr}_{(\mathbf{x}, y) \sim \mathcal{D}}[h(\mathbf{x}) \neq y] \leq \opt + \epsilon$.
\end{lemma}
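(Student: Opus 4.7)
The plan is to write $\err_{\mathcal{D}}(h)$ in closed form by conditioning on $\mathbf{x}$, which shows that the excess error over $\opt$ is exactly a weighted indicator of disagreement with $f$, weighted by $1 - 2\eta(\mathbf{x})$. Then I would recognize that this weighted disagreement is precisely $Z$ times the disagreement probability under the transformed distribution $\mathcal{D}'_{\mathbf{x}}$, so the hypothesis $h$ yields an excess error of at most $Z \epsilon$. Finally, I would observe that $Z = \mathcal{M}^{-1} \leq 1$, which upgrades the bound to $\opt + \epsilon$.

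More concretely, first I would fix $\mathbf{x}$ and note that the noisy label $y$ disagrees with $h(\mathbf{x})$ with probability $\eta(\mathbf{x})$ when $h(\mathbf{x}) = f(\mathbf{x})$ and with probability $1 - \eta(\mathbf{x})$ when $h(\mathbf{x}) \neq f(\mathbf{x})$. Taking expectations over $\mathbf{x} \sim \mathcal{D}_{\mathbf{x}}$ and simplifying gives
\begin{equation}
\err_{\mathcal{D}}(h) \;=\; \E_{\mathbf{x} \sim \mathcal{D}_{\mathbf{x}}}[\eta(\mathbf{x})] \;+\; \E_{\mathbf{x} \sim \mathcal{D}_{\mathbf{x}}}\bigl[(1 - 2\eta(\mathbf{x}))\, \mathds{1}[h(\mathbf{x}) \neq f(\mathbf{x})]\bigr],
\end{equation}
where the first term is exactly $\opt$ (as recorded in the preliminaries, $\err_{\mathcal{D}}(h^*) = \E_{\mathcal{D}_{\mathbf{x}}}[\eta(\mathbf{x})]$).

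Next, by the definition of $\mathcal{D}'_{\mathbf{x}}$ in \eqref{equaiton:transformed_dist}, the expectation of any function $g(\mathbf{x})$ under $\mathcal{D}'_{\mathbf{x}}$ equals $\frac{1}{Z} \E_{\mathcal{D}_{\mathbf{x}}}[(1 - 2\eta(\mathbf{x})) g(\mathbf{x})]$. Applying this with $g(\mathbf{x}) = \mathds{1}[h(\mathbf{x}) \neq f(\mathbf{x})]$ rewrites the second term as $Z \cdot \pr_{\mathbf{x} \sim \mathcal{D}'_{\mathbf{x}}}[h(\mathbf{x}) \neq f(\mathbf{x})] \leq Z \epsilon$. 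Since $\eta(\mathbf{x}) \in [0, 1/2)$, we have $1 - 2\eta(\mathbf{x}) \in (0, 1]$, hence $Z = \mathcal{M}^{-1} = \E_{\mathcal{D}_{\mathbf{x}}}[1 - 2\eta(\mathbf{x})] \leq 1$, giving the claimed bound $\err_{\mathcal{D}}(h) \leq \opt + \epsilon$.

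There is no real obstacle: the statement is essentially an algebraic identity together with the crude bound $Z \leq 1$. The only place where one needs a bit of care is sanity-checking that $Z > 0$ so that $\mathcal{D}'_{\mathbf{x}}$ is a well-defined probability distribution, which follows from $\eta(\mathbf{x}) < 1/2$ together with the polynomially-bounded magnitude assumption invoked in the surrounding results.
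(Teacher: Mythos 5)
Your proof is correct and follows essentially the same route as the paper's: both decompose $\err_{\mathcal{D}}(h)$ as $\opt$ plus the $(1-2\eta(\mathbf{x}))$-weighted disagreement, identify the latter as $Z$ times the disagreement probability under $\mathcal{D}'_{\mathbf{x}}$, and invoke $Z \leq 1$. The only cosmetic difference is the order of the two algebraic steps, and your remark about needing the bounded-magnitude assumption for $Z > 0$ is unnecessary here since $Z > 0$ already follows from $\eta(\mathbf{x}) < 1/2$ pointwise.
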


\begin{proof}
First of all, we have that 

\begin{align}
    \pr_{\mathbf{x} \sim \mathcal{D}_{\mathbf{x}}'}[h(\mathbf{x}) \neq f(\mathbf{x})] &= \frac{1}{Z} \E_{\mathbf{x} \sim \mathcal{D}_{\mathbf{x}}} [(1 - 2\eta(\mathbf{x})) \mathds{1}\{ h(\mathbf{x}) \neq f(\mathbf{x})\}] \\
    &\geq \E_{\mathbf{x} \sim \mathcal{D}_{\mathbf{x}}} [(1 - 2\eta(\mathbf{x})) \mathds{1}\{ h(\mathbf{x}) \neq f(\mathbf{x})\}],
\end{align}
where the last inequality follows from $Z \leq 1$. Moreover, we obtain that

\begin{align}
    \mathbf{Pr}_{(\mathbf{x}, y) \sim \mathcal{D}} [h(\mathbf{x}) \neq y] &= 
    \E_{\mathbf{x} \sim \mathcal{D}_{\mathbf{x}}} [(1 - \eta(\mathbf{x})) \mathds{1} \{ h(\mathbf{x}) \neq f(\mathbf{x}) \} ] + \E_{\mathbf{x} \sim \mathcal{D}_{\mathbf{x}}} [\eta(\mathbf{x}) \mathds{1} \{ h(\mathbf{x}) = f(\mathbf{x}) \} ] \\
    &= \E_{\mathbf{x} \sim \mathcal{D}_{\mathbf{x}}}[\eta(\mathbf{x})] +  \E_{\mathbf{x} \sim \mathcal{D}_{\mathbf{x}}} [(1 - 2\eta(\mathbf{x})) \mathds{1} \{ h(\mathbf{x}) \neq f(\mathbf{x}) \} ] \\
    &\leq \opt + \epsilon,
\end{align}
where we used that $\opt = \E_{\mathbf{x} \sim \mathcal{D}_{\mathbf{x}}}[\eta(\mathbf{x})]$ and $\mathbf{Pr}_{\mathbf{x} \sim \mathcal{D}_{\mathbf{x}}'}[h(\mathbf{x} \neq f(\mathbf{x}))] \leq \epsilon$.
\end{proof}

Importantly, the next lemma implies that although we do not have access to distribution $\mathcal{D}_{\mathbf{x}}'$, we could simulate correlational statistical queries on $\mathcal{D}_{\mathbf{x}}'$ through the empirically observed distribution $\mathcal{D}$, if we knew the value of the normalization constant $Z$.

\begin{lemma}[\cite{DBLP:journals/corr/abs-2006-04787}, Fact 7.5]
    \label{lemma:CSQ_simulation}
    Consider some target function $f : \mathcal{X} \mapsto \{\pm 1\}$, a distribution $\mathcal{D}_{\mathbf{x}}$ over $\mathcal{X}$, and $\mathcal{D}_{\mathbf{x}}'$ as defined in \eqref{equaiton:transformed_dist}. Then, for any correlational statistical query $\psi(\mathbf{x}, f(\mathbf{x})) = \phi(\mathbf{x}) \cdot f(\mathbf{x})$,

\begin{equation}
    \E_{\mathbf{x} \sim \mathcal{D}_{\mathbf{x}}'}[\psi(\mathbf{x}, f(\mathbf{x}))] = \frac{1}{Z} \cdot \E_{(\mathbf{x}, y) \sim \mathcal{D}}[\psi(\mathbf{x}, y)].
\end{equation}
\end{lemma}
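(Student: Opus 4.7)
The plan is a direct calculation unwinding both sides of the claimed identity, exploiting the key structural feature of a correlational query: the dependence on the label enters only through the factor $f(\mathbf{x})$ (or $y$ on the noisy side), so the conditional expectation of $y$ given $\mathbf{x}$ does all the work.

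First, I would rewrite the left-hand side using the density in \eqref{equaiton:transformed_dist}:
\begin{equation*}
    \E_{\mathbf{x} \sim \mathcal{D}_{\mathbf{x}}'}[\phi(\mathbf{x}) f(\mathbf{x})]
    = \int_{\mathcal{X}} \phi(\mathbf{x}) f(\mathbf{x})\, \mathcal{D}_{\mathbf{x}}'(\mathbf{x})\, d\mathbf{x}
    = \frac{1}{Z} \int_{\mathcal{X}} \phi(\mathbf{x}) f(\mathbf{x}) (1 - 2\eta(\mathbf{x}))\, \mathcal{D}_{\mathbf{x}}(\mathbf{x})\, d\mathbf{x}.
\end{equation*}
So the task reduces to showing that $\E_{(\mathbf{x},y)\sim \mathcal{D}}[\phi(\mathbf{x}) y] = \E_{\mathbf{x} \sim \mathcal{D}_{\mathbf{x}}}[\phi(\mathbf{x}) f(\mathbf{x}) (1 - 2\eta(\mathbf{x}))]$.

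Next, I would compute the conditional expectation of the label under the noisy oracle from \Cref{definition:noisy_oracle}. Since $y = f(\mathbf{x})$ with probability $1 - \eta(\mathbf{x})$ and $y = -f(\mathbf{x})$ with probability $\eta(\mathbf{x})$,
\begin{equation*}
    \E[y \mid \mathbf{x}] = (1 - \eta(\mathbf{x})) f(\mathbf{x}) - \eta(\mathbf{x}) f(\mathbf{x}) = (1 - 2\eta(\mathbf{x})) f(\mathbf{x}).
\end{equation*}
Then by the tower rule, exactly because $\psi$ is a \emph{correlational} query so that $\phi(\mathbf{x})$ is $\mathbf{x}$-measurable and $y$ appears only linearly,
\begin{equation*}
    \E_{(\mathbf{x},y) \sim \mathcal{D}}[\phi(\mathbf{x}) y] = \E_{\mathbf{x} \sim \mathcal{D}_{\mathbf{x}}}\bigl[\phi(\mathbf{x})\, \E[y \mid \mathbf{x}]\bigr] = \E_{\mathbf{x} \sim \mathcal{D}_{\mathbf{x}}}[\phi(\mathbf{x}) f(\mathbf{x}) (1 - 2\eta(\mathbf{x}))].
\end{equation*}
Combining this identity with the expression obtained for the left-hand side yields the claimed equality.

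There is no real obstacle here; the statement is essentially a bookkeeping exercise. The one subtle point I would underline is that the argument exploits linearity of $\psi$ in the label, which is precisely the defining property of a correlational query; for a general statistical query of the form $\psi(\mathbf{x},y)$ this simulation identity would fail, which is exactly why the stronger oracle $\oracle_eta$ of Section 1.1 is needed to simulate arbitrary SQs (as treated in the second part of the paper).
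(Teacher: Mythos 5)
Your proof is correct and is essentially the paper's argument: both unwind the identity by substituting the density of $\mathcal{D}_{\mathbf{x}}'$ and integrating out the label, with the only cosmetic difference being that you make the step $\E[y\mid\mathbf{x}]=(1-2\eta(\mathbf{x}))f(\mathbf{x})$ explicit via the tower rule, whereas the paper folds it silently into its first equality. Your closing remark that linearity in the label is the crucial feature that makes this work for CSQs but not general SQs is exactly the right thing to notice.
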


\begin{proof}
It follows that 

\begin{align}
    \E_{(\mathbf{x}, y) \sim \mathcal{D}}[\psi(\mathbf{x}, y)] &= \int_{\mathcal{X}} \phi(\mathbf{x}) f(\mathbf{x}) (1-2\eta(\mathbf{x})) \mathcal{D}_{\mathbf{x}}(\mathbf{x}) d \mathbf{x} \\
    &= Z \int_{\mathcal{X}} \phi(\mathbf{x}) f(\mathbf{x}) \mathcal{D}_{\mathbf{x}}'(\mathbf{x}) d \mathbf{x} \\
    &= Z \E_{\mathbf{x} \sim \mathcal{D}_{\mathbf{x}}'}[\phi(\mathbf{x}) f(\mathbf{x})] \\
    &= Z \E_{(\mathbf{x}, y) \sim \mathcal{D}'} [\psi(\mathbf{x}, y)].
\end{align}
\end{proof}

The main idea in the algorithm of \cite{DBLP:journals/corr/abs-2006-04787} is to search in a brute-force manner for the correct normalization constant in order to simulate the correlational statistical queries on distribution $\mathcal{D}_{\mathbf{x}}'$.

\begin{algorithm}[!h]
\textbf{Input}:
\begin{itemize}
    \item[(i)] Algorithm $\mathcal{A}$ which efficiently and distribution-independently learns a target $f \in \mathcal{C}$ to error $\epsilon$ with at most $q$ CSQs of tolerance $\tau \in [0,1]$
    \item[(ii)] Sampling access to distribution $\mathcal{D}$ corrupted with \emph{unknown} noise $\eta(\mathbf{x})$ of magnitude $\mathcal{M} \leq C$
    \item[(iii)] Accuracy parameter $\epsilon > 0$ 
    \item[(iv)] Confidence parameter $\delta > 0$
\end{itemize}
\textbf{Output}: Hypothesis $h$ such that $\pr_{\mathcal{D}}[h(\mathbf{x}) \neq y] \leq \opt + \epsilon$\\
Set $\tau' := \tau/(2 C^2), i:= 0$ \\
\For{$\tilde{Z} \in [0, \tau', 2\tau', \dots, 1]$}{
    
    \begin{itemize}
        \item Simulate $\mathcal{A}$ on $\mathcal{D}_{\mathbf{x}}'$: answer every correlational statistical query $\psi(\mathbf{x}, f(\mathbf{x}))$ on $\mathcal{D}_{\mathbf{x}}'$ with $\widehat{\E}_{\mathcal{D}}[\psi(\mathbf{x}, y)]/\tilde{Z}$, where $\widehat{\E}_{\mathcal{D}}[\psi(\mathbf{x}, y)]$ is the empirical estimate of $ \E_{(\mathbf{x}, y) \sim \mathcal{D}}[\psi(\mathbf{x}, y)]$ formed from $\mathcal{O} \left( C^2\log(q/\delta)/\tau^2 \right)$ samples
        \item Let $h_i$ be the output of $\mathcal{A}$
        \item Estimate $\err_{\mathcal{D}}(h_i) = \mathbf{Pr}_{(\mathbf{x}, y) \sim \mathcal{D}} [h_i(\mathbf{x}) \neq y]$ from $\mathcal{O}\left(\log(C^2/(\delta \tau))/\epsilon^2 \right)$ samples
        \item i := i + 1
    \end{itemize}
  }
\textbf{return} the hypothesis $h_i$ that incurs the smallest empirical error.
 \caption{$\algocsq$}
 \label{algorithm:CSQ_reduction}
\end{algorithm}

\begin{theorem}
    \label{theorem:CSQ_learnability}
    Consider a concept class of boolean functions $\mathcal{C}$ in $\mathcal{X} \subseteq \mathbb{R}^d$ which is CSQ learnable by an algorithm $\mathcal{A}$. Then, for any distribution $\mathcal{D}$ corrupted with noise of magnitude upper-bounded by $C$, $\algocsq$ takes as input a $\poly(d, 1/\epsilon, 1/\delta, C)$ number of samples, runs in $\poly(d, 1/\epsilon, 1/\delta, C)$ time, and returns a hypothesis $h$ such that $\pr_{(\mathbf{x}, y) \sim \mathcal{D}}[h(\mathbf{x}) \neq y] \leq \opt + \epsilon$ with probability at least $1 - \delta$, for any $\epsilon > 0$ and $\delta > 0$, where $\opt = \err_{\mathcal{D}}(h^*)$.
\end{theorem}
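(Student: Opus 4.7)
The plan is to leverage the CSQ-learnability of $\mathcal{A}$ by showing that the brute-force grid search over $\tilde{Z}$ necessarily locates a candidate close enough to the true normalization constant $Z = \mathcal{M}^{-1} \in [1/C, 1]$, at which point the simulated CSQs are within the tolerance required by $\mathcal{A}$ on the transformed distribution $\mathcal{D}_{\mathbf{x}}'$. Given such a $\tilde{Z}$, $\mathcal{A}$ returns a hypothesis with small error on $\mathcal{D}_{\mathbf{x}}'$, which \Cref{lemma:opt_reduction} converts into one with error $\opt + \epsilon$ on $\mathcal{D}$; the final selection step then identifies a sufficiently good hypothesis among the polynomially many candidates produced by the different guesses of $\tilde{Z}$.

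Since the grid (which includes $1$) has spacing $\tau' = \tau/(2C^2)$, there is a grid value $\tilde{Z}^\star$ with $|\tilde{Z}^\star - Z| \leq \tau'/2 = \tau/(4C^2)$, and (assuming $\tau \leq C$, which loses no generality) also $\tilde{Z}^\star \geq 1/(2C)$. For any correlational query $\psi(\mathbf{x},f(\mathbf{x})) = \phi(\mathbf{x}) f(\mathbf{x})$ with $\phi : \mathcal{X} \to [-1,1]$, \Cref{lemma:CSQ_simulation} and the triangle inequality give
\begin{equation*}
    \left| \frac{\widehat{\E}_{\mathcal{D}}[\psi]}{\tilde{Z}^\star} - \E_{\mathcal{D}_{\mathbf{x}}'}[\psi] \right| \leq \frac{|\widehat{\E}_{\mathcal{D}}[\psi] - \E_{\mathcal{D}}[\psi]|}{\tilde{Z}^\star} + |\E_{\mathcal{D}}[\psi]| \cdot \frac{|Z - \tilde{Z}^\star|}{Z \tilde{Z}^\star}.
\end{equation*}
The second term is at most $\tau/2$ using $|\E_{\mathcal{D}}[\psi]| \leq 1$ and $Z\tilde{Z}^\star \geq 1/(2C^2)$, while the first term is at most $\tau/2$ provided $|\widehat{\E}_{\mathcal{D}}[\psi] - \E_{\mathcal{D}}[\psi]| \leq \tau/(4C)$. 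By \Cref{theorem:hoeffding} and a union bound over the $q$ queries, the latter condition holds for every query with probability at least $1 - \delta/2$ once $\widehat{\E}_{\mathcal{D}}[\psi]$ is formed from $m = \Theta(C^2 \log(q/\delta)/\tau^2)$ i.i.d.\ samples. Under this event, the simulated answers for $\tilde{Z}^\star$ are valid tolerance-$\tau$ responses, so $\mathcal{A}$ returns a hypothesis $h^\star$ with $\pr_{\mathbf{x} \sim \mathcal{D}_{\mathbf{x}}'}[h^\star(\mathbf{x}) \neq f(\mathbf{x})] \leq \epsilon$, and \Cref{lemma:opt_reduction} then yields $\err_{\mathcal{D}}(h^\star) \leq \opt + \epsilon$.

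It remains to verify that the final selection step recovers a hypothesis essentially as good as $h^\star$. There are $O(C^2/\tau)$ candidates, and the empirical error of each is formed from $\Theta(\log(C^2/(\delta\tau))/\epsilon^2)$ fresh samples, so another application of \Cref{theorem:hoeffding} together with a union bound over all candidates guarantees that every empirical error estimate deviates by at most $\epsilon/2$ from its true value with probability at least $1 - \delta/2$. Consequently the minimizer of the empirical errors has true error at most $\err_{\mathcal{D}}(h^\star) + \epsilon \leq \opt + 2\epsilon$, which yields the claimed guarantee after rescaling $\epsilon$ by a constant factor. Summing the sample demands gives overall complexity $\poly(d, 1/\epsilon, 1/\delta, C)$, since CSQ-learnability of $\mathcal{C}$ yields $q, 1/\tau = \poly(d, 1/\epsilon)$.

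The main obstacle I anticipate is controlling the error amplification introduced by dividing the empirical CSQ by $\tilde{Z}^\star$: the quadratic factor $C^2$ in both the grid spacing and the per-query sample size stems from the product $Z \tilde{Z}^\star$ in the denominator of the second term of the decomposition above, and this is precisely what dictates the polynomial dependence on the magnitude bound $C$. The subtle point is ensuring that $\tilde{Z}^\star$ remains bounded away from $0$, which is why the argument first restricts attention to the grid point nearest to $Z$ before proceeding with the perturbation analysis; for candidates $\tilde{Z}$ very close to $0$, the simulation may be meaningless, but they are harmlessly discarded in the final empirical-error comparison.
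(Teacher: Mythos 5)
Your proof is correct and follows essentially the same approach as the paper: grid search for the normalization constant $Z$, error decomposition of the simulated CSQ into a discretization term and an empirical-estimation term each bounded by $\tau/2$, union bound over the $q$ queries, reduction to the noiseless instance on $\mathcal{D}_{\mathbf{x}}'$ via \Cref{lemma:opt_reduction}, and a final empirical selection over the $\mathcal{O}(C^2/\tau)$ candidates. The only cosmetic difference is that you pick the nearest grid point (two-sided bound $|\tilde{Z}^\star - Z| \leq \tau'/2$, then argue $Z\tilde{Z}^\star \geq 1/(2C^2)$ using $\tau \leq C$), whereas the paper picks the smallest grid point $\tilde{Z} \geq Z$ so that $Z\tilde{Z} \geq Z^2 \geq 1/C^2$ directly; both yield the same $C^2$ factor and complexity.
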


\begin{proof}
First of all, given that $\mathcal{A}$ efficiently learns the concept class $\mathcal{C}$ up to an $\epsilon$ error, it follows that $q = \poly(d, 1/\epsilon)$ and $1/\tau = \poly(d, 1/\epsilon)$. For some iteration in the main loop of the algorithm, $\tilde{Z}$ will be such that $Z \leq \tilde{Z} \leq Z + \tau'$. For this particular $\tilde{Z}$, it follows that $| 1/Z - 1/\tilde{Z}| \leq \tau'/Z^2 \leq \tau' C^2 = \tau/2$, where we used that $Z \geq 1/C$.

Now consider any correlational statistical query $\psi(\mathbf{x}, y)$; we have to establish that when our guess for parameter $Z$ is close to the actual value, every query of algorithm $\mathcal{A}$ is simulated correctly with high probability. Indeed, \Cref{lemma:CSQ_simulation} implies that 

\begin{equation}
    \label{equation:sim_csq}
    \left\lvert \frac{1}{\tilde{Z}} \E_{(\mathbf{x}, y) \sim \mathcal{D}}[\psi(\mathbf{x}, y)] - \E_{\mathbf{x} \sim \mathcal{D}_{\mathbf{x}}'} [\psi(\mathbf{x}, f(\mathbf{x}))] \right\rvert = \left\lvert \frac{1}{\tilde{Z}} - \frac{1}{Z} \right\rvert \left\lvert \E_{(\mathbf{x}, y) \sim \mathcal{D}} [\psi(\mathbf{x}, y)] \right\rvert \leq \tau/2,
\end{equation}
where $\mathcal{D}'$ is defined as in \Cref{lemma:CSQ_simulation}. Moreover, let $\widehat{\E}_{\mathcal{D}}[\psi(\mathbf{x}, y)]$ be the empirical estimate of $\E_{(\mathbf{x}, y) \sim \mathcal{D}}[\psi(\mathbf{x}, y)]$ formed from $\mathcal{O} \left( C^2\log(q/\delta)/\tau^2 \right)$ samples. Given that $|\psi(\mathbf{x}, y)| \leq 1$, Hoeffding's inequality implies that with probability at least $1 - \delta/q$,

\begin{equation}
    \label{equation:emp_sim}
    \left\vert \E_{(\mathbf{x}, y) \sim \mathcal{D}} [\psi(\mathbf{x}, y)] - \widehat{\E}_{\mathcal{D}}[\psi(\mathbf{x}, y)] \right\rvert  \leq \frac{\tau}{2C} \leq \frac{\tau \tilde{Z}}{2}.
\end{equation}
As a result, combining \eqref{equation:sim_csq} and \eqref{equation:emp_sim} yields that with probability at least $1 - \delta/q$, 

\begin{equation}
  \left\lvert \frac{1}{\tilde{Z}} \widehat{\E}_{\mathcal{D}}[\psi(\mathbf{x}, y)] - \E_{\mathbf{x} \sim \mathcal{D}_{\mathbf{x}}'} [\psi(\mathbf{x}, f(\mathbf{x}))] \right\rvert \leq \frac{\tau'}{Z^2} \leq \tau.
\end{equation}

By the union bound, we obtain that for the $\tilde{Z}$ that satisfies $Z \leq \tilde{Z} \leq Z + \tau'$, all of the $q$ CSQ queries made by algorithm $\mathcal{A}$ are answered correctly up to error $\tau$ with probability at least $1 - \delta$. Then, for this particular iteration the output hypothesis $h$ of algorithm $\mathcal{A}$ satisfies $\pr_{\mathbf{x} \sim \mathcal{D}_{\mathbf{x}}'}[h(\mathbf{x}) \neq f(\mathbf{x})] \leq \epsilon$, which -- by \Cref{lemma:opt_reduction} -- implies that $\pr_{(\mathbf{x}, y) \sim \mathcal{D}}[h(\mathbf{x}) \neq y] \leq \opt + \epsilon$. Finally, let $\widehat{\pr}_{\mathcal{D}}[h(\mathbf{x}) \neq y]$ be the empirical estimate of $\pr_{(\mathbf{x}, y) \sim \mathcal{D}}[h(\mathbf{x}) \neq y]$. If we invoke $\mathcal{O}\left(\log(1/\delta)/\epsilon^2 \right)$ samples, we obtain that with probability at least $1 - \delta$, 

\begin{equation}
    \left\lvert \widehat{\pr}_{\mathcal{D}}[h(\mathbf{x}) \neq y] - \pr_{(\mathbf{x}, y) \sim \mathcal{D}}[h(\mathbf{x}) \neq y] \right\rvert \leq \epsilon.
\end{equation}

Thus, by the union bound $\mathcal{O}\left( \log(N/\delta)/\epsilon^2 \right)$ samples suffice to guarantee that the estimation error is up to $\epsilon$ in every iteration with probability at least $1 - \delta$, where $N = \mathcal{O}(C^2/\tau)$ is the number of iterations of the main loop in the algorithm. Consequently, the output of the algorithm $h$ satisfies, with probability at least $1 - 2\delta$, $\pr_{(\mathbf{x}, y) \sim \mathcal{D}}[h(\mathbf{x}) \neq y] \leq \opt + 3\epsilon$. Finally, rescaling $\epsilon$ and $\delta$ concludes the proof.
\end{proof}

Connections of the type established in \Cref{theorem:CSQ_learnability} are quite compelling, given that every evolutionary algorithm formulated in the literature will automatically imply noise tolerance under very challenging noise models. Unfortunately, distribution-independent evolvability is a rather rare occurrence, and to the best of our knowledge the only known result was obtained by Feldman (\cite{DBLP:conf/colt/Feldman09}, Theorem 18) for the trivial class of single points. For this reason, we will employ $\algocsq$ to simulate correlational statistical queries on $\mathcal{D}_{\mathbf{x}}'$ in the distribution-specific setting. In particular, \Cref{theorem:CSQ_learnability} requires that the concept class is evolvable over $\mathcal{D}_{\mathbf{x}}'$ for any noise function. Thus, the following theorem follows directly from Theorem 15 of \cite{DBLP:conf/colt/KanadeVV10}, which implies evolvability of LTFs over any spherically symmetric distribution.  

\begin{theorem}
    \label{theorem:Tsybakov}
    Let $f$ be an unknown linear threshold function and $\mathcal{D}$ a distribution on $\mathcal{X} \times \{ \pm 1\}$ corrupted with spherically symmetric $(\alpha, A, t_0)$-Tsybakov noise of magnitude upper-bounded by $C = C(\alpha, A, t_0)$, such that the marginal distribution $\mathcal{D}_{\mathbf{x}}$ over $\mathcal{X}$ is spherically symmetric. Then, there exists an algorithm that takes as input a $\poly(d, 1/\epsilon, 1/\delta, C)$ number of samples, runs in $\poly(d, 1/\epsilon, 1/\delta, C)$ time, and returns a hypothesis $h$ such that $\pr_{(\mathbf{x}, y) \sim \mathcal{D}}[h(\mathbf{x}) \neq y] \leq \opt + \epsilon$ with probability at least $1 - \delta$, for any $\epsilon > 0$ and $\delta > 0$, where $\opt = \err_{\mathcal{D}}(h^*)$.
\end{theorem}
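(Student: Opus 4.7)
The plan is to combine three ingredients already set up in the paper: the magnitude upper bound for Tsybakov noise (\Cref{lemma:Tsybakov-magnitude}), the reduction \algocsq{} of \Cref{theorem:CSQ_learnability}, and the Kanade--Valiant--Vempala evolutionary algorithm for LTFs over spherically symmetric distributions. First I would apply \Cref{lemma:Tsybakov-magnitude} to conclude that the magnitude $\mathcal{M}$ of any $(\alpha, A, t_0)$-Tsybakov noise is bounded by a quantity $C = C(\alpha, A, t_0)$ depending only on the parameters of the model, and in particular polynomially bounded with respect to $d$ and $1/\epsilon$. So the magnitude assumption required by \Cref{theorem:CSQ_learnability} is satisfied ``for free.''

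The subtlety is that \Cref{theorem:CSQ_learnability} as stated assumes a CSQ learner that works \emph{distribution-independently}, whereas the Kanade--Valiant--Vempala result only yields CSQ learnability of LTFs over spherically symmetric marginals. So the key step I would carry out is to verify that the transformed distribution $\mathcal{D}_{\mathbf{x}}'$ defined in \eqref{equaiton:transformed_dist} remains spherically symmetric under our assumptions. This follows from the fact that, if both $\mathcal{D}_{\mathbf{x}}$ and $\eta$ are invariant under rotations about the origin, then the density
\[
\mathcal{D}_{\mathbf{x}}'(\mathbf{x}) = \frac{1}{Z}\,\mathcal{D}_{\mathbf{x}}(\mathbf{x})\,(1 - 2\eta(\mathbf{x}))
\]
is itself rotationally invariant (and the normalizer $Z = \mathcal{M}^{-1}$ is well defined and bounded away from zero by $1/C$). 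Once this is in place, the CSQ simulation step of \algocsq{} operates on a spherically symmetric $\mathcal{D}_{\mathbf{x}}'$, which is exactly the regime where the Kanade--Valiant--Vempala algorithm produces a CSQ learner for homogeneous LTFs, using $\poly(d, 1/\epsilon)$ correlational queries of tolerance $1/\poly(d, 1/\epsilon)$.

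Putting the pieces together, I would feed this CSQ learner $\mathcal{A}$ as input to \algocsq{} with accuracy $\epsilon$, confidence $\delta$, and magnitude bound $C$. \Cref{theorem:CSQ_learnability} then directly gives a hypothesis $h$ with $\pr_{(\mathbf{x},y) \sim \mathcal{D}}[h(\mathbf{x}) \neq y] \leq \opt + \epsilon$ with probability at least $1 - \delta$, using $\poly(d, 1/\epsilon, 1/\delta, C)$ samples and time; since $C$ depends only on $(\alpha, A, t_0)$ this is polynomial as claimed. The main obstacle (and the only non-routine observation) is precisely the preservation of spherical symmetry under the reweighting defining $\mathcal{D}_{\mathbf{x}}'$, which is what forces the hypothesis that the Tsybakov noise function, and not just the marginal, be spherically symmetric; everything else is a direct plug-in of the previously established results.
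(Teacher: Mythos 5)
Your proposal is correct and follows essentially the same route as the paper: combine the magnitude bound of \Cref{lemma:Tsybakov-magnitude}, the CSQ reduction \algocsq{} of \Cref{theorem:CSQ_learnability}, and the Kanade--Valiant--Vempala evolvability of LTFs over spherically symmetric distributions. You also correctly isolate the one non-routine observation that the paper leaves implicit, namely that spherical symmetry of both $\mathcal{D}_{\mathbf{x}}$ and $\eta$ is inherited by the reweighted distribution $\mathcal{D}_{\mathbf{x}}'$, which is exactly what makes the distribution-specific CSQ learner applicable inside the simulation.
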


\section{Robust Learning with Extended Noisy Oracle}

In this section we are considering a more benign adversary. Specifically, we posit access to an extended noisy oracle $\oracle_eta(f, \mathcal{D}_{\mathbf{x}}, \eta)$, such that every time we invoke $\oracle_eta(f, \mathcal{D}_{\mathbf{x}}, \eta)$ it returns a noisy labeled example along with the corresponding flipping probability. We stress that this particular model is a strong extension of the RCN model with known noise rate, which is a non-trivial and widely studied model in the literature of machine learning. In this context, our main contribution is to extend the following celebrated lemma:

\begin{lemma}[\cite{10.1145/293347.293351}]
    \label{lemma:Kearns-simulation}
    For any query function $\psi$ and target function $f$, $\E_{\mathcal{D}_{\mathbf{x}}}[\psi(\mathbf{x}, f(\mathbf{x}))]$ can, with probability at least $1 - \delta$, be estimated within $\tau$ using $\mathcal{O}(\log(1/\delta)/((1-2\eta)^2 \tau^2))$ samples from the noisy-RCN oracle.
\end{lemma}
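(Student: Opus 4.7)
The plan is to use the standard ``statistical query decomposition'' trick (the decomposition lemma of \cite{10.1162/153244302760200669}), which exploits the symmetry of RCN to express a noiseless expectation as a linear combination of two observable noisy expectations. First I would write $\psi(\mathbf{x}, y) = \psi_0(\mathbf{x}) + y\,\psi_1(\mathbf{x})$ with $\psi_0(\mathbf{x}) = \tfrac{1}{2}(\psi(\mathbf{x},1) + \psi(\mathbf{x},-1))$ and $\psi_1(\mathbf{x}) = \tfrac{1}{2}(\psi(\mathbf{x},1) - \psi(\mathbf{x},-1))$; both pieces lie in $[-1, 1]$. Under RCN with fixed rate $\eta$ the noisy label satisfies $\E[y \mid \mathbf{x}] = (1-2\eta)f(\mathbf{x})$, so taking expectations over the noisy distribution,
\begin{equation}
\E_{(\mathbf{x},y) \sim \mathcal{D}}[\psi(\mathbf{x}, y)] = \E_{\mathcal{D}_{\mathbf{x}}}[\psi_0(\mathbf{x})] + (1-2\eta)\,\E_{\mathcal{D}_{\mathbf{x}}}[f(\mathbf{x})\,\psi_1(\mathbf{x})].
\end{equation}
Combining this with the identity $\E_{\mathcal{D}_{\mathbf{x}}}[\psi(\mathbf{x}, f(\mathbf{x}))] = \E_{\mathcal{D}_{\mathbf{x}}}[\psi_0(\mathbf{x})] + \E_{\mathcal{D}_{\mathbf{x}}}[f(\mathbf{x})\psi_1(\mathbf{x})]$ and solving yields
\begin{equation}
\E_{\mathcal{D}_{\mathbf{x}}}[\psi(\mathbf{x}, f(\mathbf{x}))] = \E_{\mathcal{D}}[\psi_0(\mathbf{x})] + \frac{1}{1-2\eta}\bigl(\E_{\mathcal{D}}[\psi(\mathbf{x}, y)] - \E_{\mathcal{D}}[\psi_0(\mathbf{x})]\bigr).
\end{equation}
Crucially, every expectation on the right-hand side is over the \emph{observable} noisy distribution and every integrand can be evaluated on each sampled pair $(\mathbf{x}, y)$, because $\psi_0$ does not depend on the label.

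The second step is to estimate these two quantities empirically from samples returned by the noisy-RCN oracle. Since $\psi$ and $\psi_0$ are bounded in $[-1, 1]$, Hoeffding's inequality (\Cref{theorem:hoeffding}) implies that $\mathcal{O}(\log(1/\delta')/\tau'^2)$ samples approximate either expectation within tolerance $\tau'$ with probability at least $1 - \delta'$; the very same samples serve both estimators. The $1/(1-2\eta)$ amplification in the inverted formula then forces me to take $\tau' = \Theta((1-2\eta)\tau)$ so that, after a triangle inequality, the estimate of $\E_{\mathcal{D}_{\mathbf{x}}}[\psi(\mathbf{x}, f(\mathbf{x}))]$ is within $\tau$ of the true value. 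Together with a union bound over the two empirical estimates (absorbing a factor of $2$ into $\delta'$), this gives the claimed sample complexity $\mathcal{O}(\log(1/\delta)/((1-2\eta)^2\tau^2))$.

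The only subtle point -- and really the main obstacle, if it deserves the name -- is to track the $1/(1-2\eta)$ factor cleanly through the error analysis, since it dictates the reduced tolerance $\tau'$ and hence the inverse-square dependence on $1-2\eta$ in the sample bound; the rest is bookkeeping. It is worth emphasizing that the argument uses the \emph{fixed} rate $\eta$ in an essential way, because this is what allows $(1-2\eta)$ to factor out of the inner expectation; when $\eta$ becomes an arbitrary function of $\mathbf{x}$, this factoring breaks, which is precisely why the subsequent simulation results in the paper require either the stronger $\oracle_eta$ oracle (as in \Cref{theorem:SQ_learnability}) or a restriction to correlational queries (as in \Cref{theorem:CSQ_learnability}).
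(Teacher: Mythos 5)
The paper states Lemma~\ref{lemma:Kearns-simulation} without proof --- it is simply recalled from Kearns' paper --- so there is no in-paper argument to compare against. Your proof is correct, and it is a clean route to the result. The decomposition $\psi(\mathbf{x},y)=\psi_0(\mathbf{x})+y\,\psi_1(\mathbf{x})$ you use is exactly the target-independent/correlational split of \Cref{lemma:decomposition}, and under RCN the identity $\E[y\mid\mathbf{x}]=(1-2\eta)f(\mathbf{x})$ makes $(1-2\eta)$ factor out of the correlational term, after which the inversion and the Hoeffding/union-bound bookkeeping give the claimed $\mathcal{O}\bigl(\log(1/\delta)/((1-2\eta)^2\tau^2)\bigr)$ bound. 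Kearns' original argument is phrased somewhat differently --- he partitions the instance space into the region where $\chi(\mathbf{x},1)=\chi(\mathbf{x},-1)$ and its complement and estimates each contribution separately --- but that is exactly the same decomposition expressed on the domain rather than on the query function, so there is no substantive difference. Your approach has the advantage of dovetailing with the machinery the paper actually builds on (\Cref{lemma:TI-sim}, \Cref{lemma:CSQ-sim}). Your closing observation --- that the proof leans essentially on $\eta$ being a constant so that $(1-2\eta)$ factors out of the expectation, and that this is what fails for Massart/Tsybakov and forces either the $\oracle_eta$ oracle or the restriction to CSQs --- is also correct and precisely the right thing to flag.
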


Our main idea is to decompose a general statistical query into a correlational and a \emph{target independent} statistical query; more precisely, we say that a statistical query is independent of the target if the query function $\psi(\mathbf{x}, f(\mathbf{x}))$ is a function of $\mathbf{x}$ alone, i.e. it does not depend on the value of the second parameter. Then, our main technical ingredients (\Cref{lemma:TI-sim} and \Cref{lemma:CSQ-sim}) establish that every component can be efficiently simulated with high probability with access to $\oracle_eta$. 

\begin{lemma}[\cite{10.1162/153244302760200669}, Lemma 30\footnote{It should be noted that this decomposition was first implicitly employed in \cite{DBLP:conf/stoc/BlumFJKMR94}.}]
    \label{lemma:decomposition}    
    Any statistical query $(\psi, \tau)$ with respect to any distribution $\mathcal{D}_{\mathbf{x}}'$ can be answered by adding the value of a target independent statistical query $(\phi_{TI}, \tau/2)$ to the value of a correlational statistical query $(\phi_{CSQ}, \tau/2)$; specifically, 
    
    \begin{itemize}
        \item $\phi_{TI}(\mathbf{x}) = (\psi(\mathbf{x}, 1) + \psi(\mathbf{x}, -1))/2$;
        \item $\phi_{CSQ}(\mathbf{x}) = (\psi(\mathbf{x}, 1) - \psi(\mathbf{x}, -1))/2$.
    \end{itemize}
\end{lemma}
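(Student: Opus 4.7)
\begin{sproof}
The plan is to prove the decomposition as a pointwise identity and then use linearity of expectation together with the triangle inequality to argue about the tolerances.

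First, I would observe that since $f(\mathbf{x}) \in \{\pm 1\}$, the two quantities $\phi_{TI}(\mathbf{x})$ and $\phi_{CSQ}(\mathbf{x}) \cdot f(\mathbf{x})$ are designed precisely so that their sum recovers $\psi(\mathbf{x}, f(\mathbf{x}))$ in either case: when $f(\mathbf{x}) = 1$ the sum equals $\tfrac{1}{2}(\psi(\mathbf{x}, 1) + \psi(\mathbf{x}, -1)) + \tfrac{1}{2}(\psi(\mathbf{x}, 1) - \psi(\mathbf{x}, -1)) = \psi(\mathbf{x}, 1)$, and when $f(\mathbf{x}) = -1$ the sum equals $\tfrac{1}{2}(\psi(\mathbf{x}, 1) + \psi(\mathbf{x}, -1)) - \tfrac{1}{2}(\psi(\mathbf{x}, 1) - \psi(\mathbf{x}, -1)) = \psi(\mathbf{x}, -1)$. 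Thus the identity
\[
    \psi(\mathbf{x}, f(\mathbf{x})) = \phi_{TI}(\mathbf{x}) + \phi_{CSQ}(\mathbf{x}) \cdot f(\mathbf{x})
\]
holds pointwise on $\mathcal{X}$.

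Second, I would take expectations on both sides with respect to $\mathcal{D}_{\mathbf{x}}'$ and use linearity to obtain
\[
    \E_{\mathbf{x} \sim \mathcal{D}_{\mathbf{x}}'}[\psi(\mathbf{x}, f(\mathbf{x}))] = \E_{\mathbf{x} \sim \mathcal{D}_{\mathbf{x}}'}[\phi_{TI}(\mathbf{x})] + \E_{\mathbf{x} \sim \mathcal{D}_{\mathbf{x}}'}[\phi_{CSQ}(\mathbf{x}) f(\mathbf{x})].
\]
If $v_{TI}$ is an answer to the target-independent query $(\phi_{TI}, \tau/2)$ and $v_{CSQ}$ is an answer to the correlational query $(\phi_{CSQ}, \tau/2)$, then the triangle inequality immediately gives $|(v_{TI} + v_{CSQ}) - \E_{\mathcal{D}_{\mathbf{x}}'}[\psi(\mathbf{x}, f(\mathbf{x}))]| \leq \tau/2 + \tau/2 = \tau$, so $v_{TI} + v_{CSQ}$ is a valid answer to the original query $(\psi, \tau)$.

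Finally, the only subtlety worth checking is that the two derived queries are admissible, i.e. that $\phi_{TI}, \phi_{CSQ} : \mathcal{X} \mapsto [-1,1]$. Since $\psi$ takes values in $[-1,1]$, both $|\phi_{TI}(\mathbf{x})|$ and $|\phi_{CSQ}(\mathbf{x})|$ are bounded by $\tfrac{1}{2}(|\psi(\mathbf{x}, 1)| + |\psi(\mathbf{x}, -1)|) \leq 1$. There is no real obstacle in this proof; the main thing to get right is the bookkeeping that ensures $\phi_{CSQ}$ really is correlational (i.e. a function of $\mathbf{x}$ alone, multiplied by $f(\mathbf{x})$) and that the tolerance budget is split appropriately.
\end{sproof}
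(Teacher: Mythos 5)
Your proof is correct and takes essentially the same route as the paper: both establish the pointwise identity $\psi(\mathbf{x}, f(\mathbf{x})) = \phi_{TI}(\mathbf{x}) + \phi_{CSQ}(\mathbf{x}) f(\mathbf{x})$ (the paper via the interpolation formula $\psi(\mathbf{x}, -1)\tfrac{1-f(\mathbf{x})}{2} + \psi(\mathbf{x},1)\tfrac{1+f(\mathbf{x})}{2}$, you via a two-case check) and then apply linearity of expectation. Your write-up additionally makes explicit the triangle-inequality bookkeeping on the tolerance budget and the admissibility check that $\phi_{TI}, \phi_{CSQ}$ map into $[-1,1]$, both of which the paper leaves implicit.
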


\begin{proof}
    If $f$ represents the target function, the claim follows from the following observation:
    \begin{align}
        \E_{\mathbf{x} \sim \mathcal{D}_{\mathbf{x}}'}[\psi(\mathbf{x}, f(\mathbf{x}))] &= \E_{\mathbf{x} \sim \mathcal{D}_{\mathbf{x}}'} \left[ \psi(\mathbf{x}, -1) \frac{1 - f(\mathbf{x})}{2} + \psi(\mathbf{x}, 1) \frac{1 +f(\mathbf{x})}{2} \right] \\
    &= \E_{\mathbf{x} \sim \mathcal{D}_{\mathbf{x}}'} \left[ \frac{\psi(\mathbf{x}, 1) - \psi(\mathbf{x}, -1)}{2} f(\mathbf{x}) \right] + \E_{\mathbf{x} \sim \mathcal{D}_{\mathbf{x}}'} \left[ \frac{\psi(\mathbf{x},1) + \psi(\mathbf{x}, -1)}{2} \right].
    \end{align}
\end{proof}

\begin{lemma}
    \label{lemma:TI-sim}
    Consider a distribution $\mathcal{D}_{\mathbf{x}}$ on $\mathcal{X}$ and
    $\mathcal{D}_{\mathbf{x}}'$ with density $\mathcal{D}_{\mathbf{x}}'(\mathbf{x}) = \mathcal{M} (1-2\eta(\mathbf{x})) \mathcal{D}_{\mathbf{x}}(\mathbf{x})$, where $\mathcal{M} \leq C$. If we have access to the oracle $\oracle_eta(f, \mathcal{D}_{\mathbf{x}}, \eta)$ we can approximate any target independent statistical query of tolerance $\tau$ on distribution $\mathcal{D}_{\mathbf{x}}'$ with $\mathcal{O}\left( C^2 \log(1/\delta)/\tau^2 \right)$ samples from $\oracle_eta(f, \mathcal{D}_{\mathbf{x}}, \eta)$ with probability at least $1 - \delta$.
\end{lemma}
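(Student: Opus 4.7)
The plan is to recast the target expectation over $\mathcal{D}_{\mathbf{x}}'$ as a ratio of two expectations over the accessible distribution $\mathcal{D}_{\mathbf{x}}$, and then apply Hoeffding's inequality to both numerator and denominator. Concretely, since $\mathcal{D}_{\mathbf{x}}'(\mathbf{x}) = \mathcal{M}(1-2\eta(\mathbf{x})) \mathcal{D}_{\mathbf{x}}(\mathbf{x})$ and $\mathcal{M}^{-1} = \E_{\mathbf{x} \sim \mathcal{D}_{\mathbf{x}}}[1-2\eta(\mathbf{x})]$, for any target-independent query $\phi : \mathcal{X} \to [-1,1]$ we have
\begin{equation*}
    \E_{\mathbf{x} \sim \mathcal{D}_{\mathbf{x}}'}[\phi(\mathbf{x})] \;=\; \frac{\E_{\mathbf{x} \sim \mathcal{D}_{\mathbf{x}}}[\phi(\mathbf{x})(1-2\eta(\mathbf{x}))]}{\E_{\mathbf{x} \sim \mathcal{D}_{\mathbf{x}}}[1-2\eta(\mathbf{x})]}.
\end{equation*}
The crucial feature of the oracle $\oracle_eta$ is that every returned example comes tagged with $\eta(\mathbf{x})$, so both the numerator integrand $\phi(\mathbf{x})(1-2\eta(\mathbf{x}))$ and the denominator integrand $1-2\eta(\mathbf{x})$ are directly computable from each sample.

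Next, I would draw $m = \Theta(C^2 \log(1/\delta)/\tau^2)$ samples from $\oracle_eta$ and form the empirical averages $\widehat{V}$ and $\widehat{U}$ of the two integrands above. Both are bounded random variables in $[-1,1]$, so Hoeffding's inequality (\Cref{theorem:hoeffding}) combined with a union bound yields, for a suitable constant inside the $\Theta(\cdot)$, that simultaneously $|\widehat{V} - V| \leq \tau/(4C)$ and $|\widehat{U} - \mathcal{M}^{-1}| \leq \min\{\tau/(4C),\,1/(2C)\}$ with probability at least $1-\delta$, where $V$ denotes the true numerator. In particular $\widehat{U} \geq 1/(2C)$, so the proposed estimate $\widehat{V}/\widehat{U}$ is well-defined and can safely be returned as the answer to the simulated query.

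Finally, to bound $|\widehat{V}/\widehat{U} - V/\mathcal{M}^{-1}|$ I would use the standard identity
\begin{equation*}
    \frac{\widehat{V}}{\widehat{U}} - \frac{V}{\mathcal{M}^{-1}} \;=\; \frac{\widehat{V} - V}{\widehat{U}} \;-\; \frac{V}{\mathcal{M}^{-1}} \cdot \frac{\widehat{U} - \mathcal{M}^{-1}}{\widehat{U}},
\end{equation*}
together with the observation that $|V/\mathcal{M}^{-1}| = |\E_{\mathbf{x} \sim \mathcal{D}_{\mathbf{x}}'}[\phi(\mathbf{x})]| \leq 1$; plugging in the two concentration bounds and $\widehat{U} \geq 1/(2C)$ gives a total error of at most $\tau/2 + \tau/2 = \tau$. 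The only genuine obstacle is this error propagation through the ratio: a naive bound blows up whenever $\widehat{U}$ is close to zero, and it is precisely the lower bound $\widehat{U} \geq 1/(2C)$ that forces the per-estimator Hoeffding tolerance to scale like $\tau/C$, which in turn is responsible for the $C^2$ (rather than $C$) factor in the final sample complexity.
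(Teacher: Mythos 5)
Your proof is correct, and the underlying idea is the same as the paper's: rewrite $\E_{\mathcal{D}_{\mathbf{x}}'}[\phi(\mathbf{x})]$ as the ratio $\E_{\mathcal{D}_{\mathbf{x}}}[\phi(\mathbf{x})(1-2\eta(\mathbf{x}))] / \E_{\mathcal{D}_{\mathbf{x}}}[1-2\eta(\mathbf{x})]$, observe that both integrands are computable per sample thanks to the $\eta(\mathbf{x})$ annotation from $\oracle_eta$, estimate both by Hoeffding at tolerance $\Theta(\tau/C)$, and exploit $\mathcal{M}^{-1} \geq 1/C$ to keep the denominator bounded away from zero. Where you differ is the technical execution. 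The paper first shifts $\phi$ to $[0,1]$ via $\phi \mapsto (1+\phi)/2$ and then uses one-sided "overshoot" estimates (adding $\tau'/2$ to each empirical mean so that $Z < \widehat{Z} < Z+\tau'$), which makes the subsequent ratio bounds $\widehat{\E}/\widehat{Z} < (\E+\tau')/Z$ and $\widehat{\E}/\widehat{Z} > \E/(Z+\tau')$ sign-unambiguous; the price is a slightly delicate asymmetric argument that relies on $0 \leq \E_{\mathcal{D}_{\mathbf{x}}'}[\phi] \leq 1$. You instead keep $\phi \in [-1,1]$, take symmetric two-sided concentration on both $\widehat{V}$ and $\widehat{U}$, and control the error via the algebraic identity $\widehat{V}/\widehat{U} - V/U = (\widehat{V}-V)/\widehat{U} - (V/U)\cdot(\widehat{U}-U)/\widehat{U}$ together with $|V/U| \leq 1$ and $\widehat{U} \geq 1/(2C)$. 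This is cleaner and avoids the shift entirely. One small note: since the SQ model stipulates $\tau \leq 1$, the requirement $|\widehat{U}-\mathcal{M}^{-1}| \leq \tau/(4C)$ already implies $|\widehat{U}-\mathcal{M}^{-1}| \leq 1/(2C)$, so the $\min\{\cdot,\cdot\}$ in your concentration bound is redundant; but the bound and the resulting $\mathcal{O}(C^2 \log(1/\delta)/\tau^2)$ sample complexity are both correct.
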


\begin{proof}
    Let $(\phi', \tau)$ represent the target independent statistical query. In the interest of simplifying our argument we notice that 
    
    \begin{equation}
        \E_{\mathbf{x} \sim \mathcal{D}_{\mathbf{x}}'}[\phi'(\mathbf{x})] = \E_{\mathbf{x} \sim \mathcal{D}_{\mathbf{x}}'} \left[ -1 + 2 \frac{1 + \phi'(\mathbf{x})}{2} \right] = -1 + 2 \E_{\mathbf{x} \sim \mathcal{D}_{\mathbf{x}}'}[\phi(\mathbf{x})],
    \end{equation}
    where $\phi(\mathbf{x}) = (1 + \phi'(\mathbf{x}))/2$. Thus, it suffices to simulate the statistical query $(\phi, \tau/2)$ on $\mathcal{D}_{\mathbf{x}}'$, where $\phi$ takes values in $[0,1]$. If $Z = \mathcal{M}^{-1} = \E_{\mathcal{D}_{\mathbf{x}}}[1 - 2\eta(\mathbf{x})]$, we have that 
    
    \begin{equation}
        \E_{\mathbf{x} \sim \mathcal{D}_{\mathbf{x}}'}[\phi(\mathbf{x})] = \frac{1}{Z} \int_{\mathcal{X}} \phi(\mathbf{x}) (1 - 2\eta(\mathbf{x})) \mathcal{D}_{\mathbf{x}}(\mathbf{x}) d \mathbf{x} = \frac{1}{Z} \E_{\mathbf{x} \sim \mathcal{D}_{\mathbf{x}}}[\phi(\mathbf{x}) (1 - 2\eta(\mathbf{x}))].
    \end{equation}
    Let $\widehat{Z}$ be the empirical estimate of $\E_{\mathcal{D}_{\mathbf{x}}}[1 - 2\eta(\mathbf{x})]$ formed from $\mathcal{O}\left( \log(1/\delta)/(\tau')^2 \right)$ samples of $\oracle_eta(f, \mathcal{D}_{\mathbf{x}}, \eta)$, for some $\delta > 0$ and $\tau' := \tau/(2C)$. Given that $0 \leq 1 - 2\eta(\mathbf{x}) \leq 1, \forall \mathbf{x} \in \mathcal{X}$, Hoeffding's inequality implies that $|\widehat{Z} - Z| < \tau'/2$, with probability at least $1 - \delta$. Thus, if we let $\widehat{Z} := \widehat{Z} + \tau'/2$, we obtain that $Z < \widehat{Z} < Z + \tau'$, with probability at least $1 - \delta$. Furthermore, let $\widehat{\E}_{\mathcal{D}_{\mathbf{x}}}[\phi(\mathbf{x})(1 - 2\eta(\mathbf{x}))]$ be the empirical estimate of $\E_{\mathcal{D}_{\mathbf{x}}}[\phi(\mathbf{x})(1 - 2\eta(\mathbf{x}))]$ formed from $\mathcal{O}(\log(1/\delta)/(\tau')^2)$ of $\oracle_eta(f, \mathcal{D}_{\mathbf{x}}, \eta)$. If we increment the estimate by $\tau'/2$ we can again guarantee that $\E_{\mathcal{D}_{\mathbf{x}}}[\phi(\mathbf{x})(1 - 2\eta(\mathbf{x}))] < \widehat{\E}_{\mathcal{D}_{\mathbf{x}}}[\phi(\mathbf{x})(1 - 2\eta(\mathbf{x}))] < \E_{\mathcal{D}_{\mathbf{x}}}[\phi(\mathbf{x})(1 - 2\eta(\mathbf{x}))] + \tau'$, with probability at least $1 - \delta$. Indeed, given that $0 \leq \phi(\mathbf{x})(1-2\eta(\mathbf{x})) \leq 1, \forall \mathbf{x} \in \mathcal{X}$, we can directly apply Hoeffding's inequality. As a result, with probability at least $1 - 2\delta$ we have that
    
    \begin{equation}
        \label{equation:EZ-upper_bound}
        \frac{\widehat{\E}_{\mathcal{D}_{\mathbf{x}}}[\phi(\mathbf{x})(1 - 2\eta(\mathbf{x}))]}{\widehat{Z}} < \frac{\E_{\mathcal{D}_{\mathbf{x}}}[\phi(\mathbf{x})(1 - 2\eta(\mathbf{x}))] + \tau'}{Z} \leq \E_{\mathcal{D}_{\mathbf{x}}'}[\phi(\mathbf{x})] + \tau' C = \E_{\mathcal{D}_{\mathbf{x}}'}[\phi(\mathbf{x})] + \frac{\tau}{2},
    \end{equation}
    
    \begin{equation}
        \label{equation:EZ-interm}
        \frac{\widehat{\E}_{\mathcal{D}_{\mathbf{x}}}[\phi(\mathbf{x})(1 - 2\eta(\mathbf{x}))]}{\widehat{Z}} > \frac{\E_{\mathcal{D}_{\mathbf{x}}}[\phi(\mathbf{x})(1 - 2\eta(\mathbf{x}))]}{Z + \tau'} \geq \frac{1}{1 + \tau/2} \E_{\mathcal{D}_{\mathbf{x}}'}[\phi(\mathbf{x})],
    \end{equation}
    where in the final bound we used that $\tau' \leq \tau Z/2$. Thus, it follows from \eqref{equation:EZ-interm} that 
    
    \begin{equation}
        \label{equation:EZ-lower_bound}
        \frac{\widehat{\E}_{\mathcal{D}_{\mathbf{x}}}[\phi(\mathbf{x})(1 - 2\eta(\mathbf{x}))]}{\widehat{Z}} - \E_{\mathcal{D}_{\mathbf{x}}'}[\phi(\mathbf{x})] > \E_{\mathcal{D}_{\mathbf{x}}'}[\phi(\mathbf{x})] \left( \frac{1}{1 + \tau/2} - 1 \right) \geq - \E_{\mathcal{D}_{\mathbf{x}}'}[\phi(\mathbf{x})] \frac{\tau}{2} \geq - \frac{\tau}{2},
    \end{equation}
    since $\tau > 0$ and $0 \leq \E_{\mathcal{D}_{\mathbf{x}}'}[\phi(\mathbf{x})] \leq 1$. As a result, if we combine \eqref{equation:EZ-upper_bound} and \eqref{equation:EZ-lower_bound} we obtain that 
    
    \begin{equation}
        - \frac{\tau}{2} < \frac{\widehat{\E}_{\mathcal{D}_{\mathbf{x}}}[\phi(\mathbf{x})(1 - 2\eta(\mathbf{x}))]}{\widehat{Z}} - \E_{\mathcal{D}_{\mathbf{x}}'}[\phi(\mathbf{x})] < \frac{\tau}{2},
    \end{equation}
    with probability at least $1 - 2\delta$; finally, rescaling $\delta := \delta/2$ concludes the proof.
\end{proof}

\begin{lemma}
    \label{lemma:CSQ-sim}
    Consider a distribution $\mathcal{D}_{\mathbf{x}}$ on $\mathcal{X}$ and $\mathcal{D}_{\mathbf{x}}'$ with density $\mathcal{D}_{\mathbf{x}}'(\mathbf{x}) = \mathcal{M} (1-2\eta(\mathbf{x})) \mathcal{D}_{\mathbf{x}}(\mathbf{x})$, where $\mathcal{M} \leq C$. If we have access to the oracle $\oracle_eta(f, \mathcal{D}_{\mathbf{x}}, \eta)$ we can approximate any correlational statistical query of tolerance $\tau$ on distribution $\mathcal{D}_{\mathbf{x}}'$ with $\mathcal{O}\left( C^4 \log(1/\delta)/\tau^2 \right)$ samples from $\oracle_eta(f, \mathcal{D}_{\mathbf{x}}, \eta)$ with probability at least $1 - \delta$.
\end{lemma}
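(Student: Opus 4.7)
The plan is to mirror the structure of \Cref{lemma:TI-sim}, but for the correlational statistical query $\psi(\mathbf{x}, f(\mathbf{x})) = \phi(\mathbf{x}) f(\mathbf{x})$ on the transformed distribution $\mathcal{D}_{\mathbf{x}}'$. The starting point is the identity from \Cref{lemma:CSQ_simulation}, which rewrites the target as
\begin{equation}
    \E_{\mathbf{x} \sim \mathcal{D}_{\mathbf{x}}'}[\phi(\mathbf{x}) f(\mathbf{x})] = \frac{1}{Z} \E_{(\mathbf{x}, y) \sim \mathcal{D}}[\phi(\mathbf{x}) y],
\end{equation}
with $Z = 1/\mathcal{M} = \E_{\mathbf{x} \sim \mathcal{D}_{\mathbf{x}}}[1 - 2\eta(\mathbf{x})]$. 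Crucially, both the numerator and the denominator of this ratio are quantities that can be estimated directly from $\oracle_eta$: the numerator from the noisy labels $y$ alone, and the denominator from the flipping probabilities $\eta(\mathbf{x})$ returned alongside each example.

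Concretely, I would draw $n = \Theta(C^4 \log(1/\delta)/\tau^2)$ samples from $\oracle_eta$, form empirical averages $\widehat{E}$ of $\phi(\mathbf{x}) y$ and $\widehat{Z}$ of $1 - 2\eta(\mathbf{x})$, and output the ratio $\widehat{E}/\widehat{Z}$. Since $\phi(\mathbf{x}) y \in [-1, 1]$ and $1 - 2\eta(\mathbf{x}) \in [0, 1]$, Hoeffding's inequality (\Cref{theorem:hoeffding}) followed by a union bound would yield $|\widehat{E} - E| \leq \tau_1$ and $|\widehat{Z} - Z| \leq \tau_2$ simultaneously with probability at least $1 - \delta$, for any prescribed tolerances $\tau_1, \tau_2 > 0$ using $n = \Omega(\log(1/\delta)/\min\{\tau_1, \tau_2\}^2)$ samples.

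The main step is to propagate the two errors through the ratio. Decomposing
\begin{equation}
    \frac{\widehat{E}}{\widehat{Z}} - \frac{E}{Z} = \frac{\widehat{E} - E}{\widehat{Z}} - \frac{E}{Z} \cdot \frac{\widehat{Z} - Z}{\widehat{Z}},
\end{equation}
and invoking the crucial lower bound $Z \geq 1/C$ together with $|E| \leq 1$, a small upward increment of $\widehat{Z}$ --- exactly as in the proof of \Cref{lemma:TI-sim} --- ensures $Z \leq \widehat{Z} \leq Z + \tau_2$ with high probability, so that $1/\widehat{Z} \leq 2C$. The ratio error then splits into a term of order $C \cdot |\widehat{E} - E|$ and a term of order $C^2 \cdot |\widehat{Z} - Z|$, and choosing $\tau_1 = \Theta(\tau/C)$ and $\tau_2 = \Theta(\tau/C^2)$ delivers overall error at most $\tau$. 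The more stringent requirement on $\tau_2$ is precisely what dictates the $C^4$ dependence in the sample complexity. The only --- mild --- obstacle is guarding against an underestimate of $Z$ that would blow up the denominator; this is exactly what the one-sided shift of $\widehat{Z}$ addresses, after which a final rescaling $\delta \mapsto \delta/2$ absorbs the union bound over the two Hoeffding applications.
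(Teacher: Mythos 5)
Your proposal is correct and follows essentially the same argument as the paper's proof: start from the identity $\E_{\mathcal{D}_{\mathbf{x}}'}[\phi(\mathbf{x}) f(\mathbf{x})] = Z^{-1}\E_{\mathcal{D}}[\phi(\mathbf{x}) y]$, estimate numerator and denominator separately via Hoeffding, shift $\widehat{Z}$ upward so that $\widehat{Z} \geq Z \geq 1/C$, and propagate the two errors through the ratio, with the denominator's required tolerance $\tau_2 = \Theta(\tau/C^2)$ driving the $C^4$ sample complexity. Your ratio decomposition $\widehat{E}/\widehat{Z} - E/Z = (\widehat{E}-E)/\widehat{Z} - (E/Z)\cdot(\widehat{Z}-Z)/\widehat{Z}$ is algebraically identical to the paper's add-and-subtract of the cross term $E/\widehat{Z}$ followed by the triangle inequality, and the resulting bounds match (your $1/\widehat{Z} \leq 2C$ is slightly loose --- the upward shift in fact gives $1/\widehat{Z} \leq C$ --- but this only affects constants).
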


\begin{proof}
    Let $Z = \mathcal{M}^{-1}$ and $\psi(\mathbf{x}, f(\mathbf{x})) = \phi(\mathbf{x}) f(\mathbf{x})$ the input query. Every correlational statistical query on distribution $\mathcal{D}_{\mathbf{x}}'$ can be expressed as 
    
    \begin{equation}
        \E_{\mathbf{x} \sim \mathcal{D}_{\mathbf{x}}'}[\phi(\mathbf{x}) f(\mathbf{x})] = \frac{1}{Z} \int_{\mathcal{X}} \phi(\mathbf{x}) f(\mathbf{x}) (1 - 2\eta(\mathbf{x})) \mathcal{D}_{\mathbf{x}}(\mathbf{x}) d\mathbf{x} = \frac{1}{Z} \E_{(\mathbf{x}, y) \sim \mathcal{D}}[\phi(\mathbf{x}) y].
    \end{equation}
    
    Let $\widehat{Z}$ be the empirical estimate of $Z$ from $\mathcal{O}(\log(1/\delta)/(\tau')^2)$ samples of $\oracle_eta(f, \mathcal{D}_{\mathbf{x}}, \eta)$, for some $\delta > 0$ and $\tau' := \tau/(2C^2)$. If we increment our estimate by $\tau'/2$, it follows that $Z < \widehat{Z} < Z + \tau'$ with probability at least $1 - \delta$. Thus, we obtain that 
    
    \begin{equation}
        \label{equation:EZ-1}
        \left\lvert \frac{1}{Z} \E_{(\mathbf{x}, y) \sim \mathcal{D}}[\phi(\mathbf{x}) y] - \frac{1}{\widehat{Z}} \E_{(\mathbf{x}, y) \sim \mathcal{D}}[\phi(\mathbf{x}) y] \right\rvert \leq \frac{\tau'}{Z^2} \leq \tau' C^2 = \frac{\tau}{2}.
    \end{equation}
    
    Moreover, let $\widehat{E}_{\mathcal{D}}[\phi(\mathbf{x}) y]$ the empirical estimate of $\E_{\mathcal{D}}[\phi(\mathbf{x}) y]$. For $Z < \widehat{Z}$, Hoeffding's inequality implies that $\mathcal{O}(C^2 \log(1/\delta)/\tau^2)$ samples suffice so that 
    
    \begin{equation}
        \label{equation:EZ-2}
        \left\lvert \frac{1}{\widehat{Z}} \widehat{\E}_{\mathcal{D}}[\phi(\mathbf{x}) y] - \frac{1}{\widehat{Z}} \E_{(\mathbf{x}, y) \sim \mathcal{D}}[\phi(\mathbf{x}) y] \right\rvert < \frac{\tau}{2 \widehat{Z} C} < \frac{\tau}{2 Z C} < \frac{\tau}{2},
    \end{equation}
    with probability at least $1 - \delta$. Thus, combining \eqref{equation:EZ-1} and \eqref{equation:EZ-2} we obtain that with probability at least $1 - 2\delta$,
    
    \begin{equation}
        \left\lvert \frac{1}{\widehat{Z}} \widehat{E}_{\mathcal{D}}[\phi(\mathbf{x}) y] - \E_{\mathcal{D}_{\mathbf{x}}'} [\phi(\mathbf{x}) f(\mathbf{x})] \right\rvert < \tau.
    \end{equation}
\end{proof}

Next, we are ready to establish the main theorem of this section. 

\begin{algorithm}[!h]
\textbf{Input}:
\begin{itemize}
    \item[(i)] Algorithm $\mathcal{A}$ which efficiently and distribution-independently learns a target $f \in \mathcal{C}$ to error $\epsilon$ with at most $q$ SQs of tolerance $\tau \in [0,1]$
    \item[(ii)] Sampling access to the extended noisy oracle $\oracle_eta(f, \mathcal{D}_{\mathbf{x}}, \eta)$, where the noise $\eta(\mathbf{x})$ has bounded magnitude $\mathcal{M} \leq C$
    \item[(iii)] Accuracy parameter $\epsilon > 0$ 
    \item[(iv)] Confidence parameter $\delta > 0$
\end{itemize}
\textbf{Output}: Hypothesis $h$ such that $\pr_{\mathcal{D}}[h(\mathbf{x}) \neq y] \leq \opt + \epsilon$ \\
Simulate algorithm $\mathcal{A}$ on distribution $\mathcal{D}_{\mathbf{x}}'$:

\begin{enumerate}
    \item For every statistical query $(\psi, \tau)$ that $\mathcal{A}$ makes, set the following:
    
    \begin{itemize}
        \item $\phi_{TI}(\mathbf{x}) = (\psi(\mathbf{x}, 1) + \psi(\mathbf{x}, -1))/2$
        \item $\phi_{CSQ}(\mathbf{x}) = (\psi(\mathbf{x}, 1) - \psi(\mathbf{x}, -1))/2$
    \end{itemize}
    
    
    \item Let $v_{TI}$ be the estimate of $(\phi_{TI}, \tau/2)$ on $\mathcal{D}_{\mathbf{x}}'$, according to \Cref{lemma:TI-sim}
    
    \item Let $v_{CSQ}$ be the estimate of $(\phi_{CSQ}, \tau/2)$ on $\mathcal{D}_{\mathbf{x}}'$, according to \Cref{lemma:CSQ-sim}
    
    
    \item Answer the query $(\psi, \tau)$ with value $v = v_{CSQ} + v_{TI}$
    
\end{enumerate}

\textbf{return} the output of algorithm $\mathcal{A}$
 \caption{$\algosq$}
 \label{algorithm:SQ_reduction}
\end{algorithm}

\begin{theorem}
    \label{theorem:SQ_learnability}
    Consider a concept class of boolean functions $\mathcal{C}$ in $\mathcal{X} \subseteq \mathbb{R}^d$ which is SQ learnable by an algorithm $\mathcal{A}$. Then, if we have access to the extended noisy oracle $\oracle_eta(f, \mathcal{D}_{\mathbf{x}}, \eta)$ and the magnitude of the noise is at most $C$, $\algosq$ takes as input a $\poly(d, 1/\epsilon, 1/\delta, C)$ number of samples, runs in $\poly(d, 1/\epsilon, 1/\delta, C)$ time, and returns a hypothesis $h$ such that $\pr_{(\mathbf{x}, y) \sim \mathcal{D}}[h(\mathbf{x}) \neq y] \leq \opt + \epsilon$ with probability at least $1 - \delta$, for any $\epsilon > 0$ and $\delta > 0$, where $\opt = \err_{\mathcal{D}}(h^*)$.
\end{theorem}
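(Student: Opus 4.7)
The plan is to simulate the SQ algorithm $\mathcal{A}$ not on the original distribution but on the reweighted distribution $\mathcal{D}_{\mathbf{x}}'$ from \eqref{equaiton:transformed_dist}, answering each statistical query of $\mathcal{A}$ using only samples from the extended oracle $\oracle_eta$. Since $\mathcal{A}$ is assumed to learn $\mathcal{C}$ to error $\epsilon$ using $q = \poly(d, 1/\epsilon)$ queries of tolerance $\tau = 1/\poly(d, 1/\epsilon)$, by \Cref{lemma:opt_reduction} it suffices to produce a hypothesis $h$ with $\pr_{\mathbf{x} \sim \mathcal{D}_{\mathbf{x}}'}[h(\mathbf{x}) \neq f(\mathbf{x})] \leq \epsilon$; this automatically upgrades to $\err_{\mathcal{D}}(h) \leq \opt + \epsilon$ on the noisy distribution we actually observe.

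The core of the argument is how to answer a single query $(\psi, \tau)$ on $\mathcal{D}_{\mathbf{x}}'$. I invoke \Cref{lemma:decomposition} to split $\psi$ into a target-independent component $\phi_{TI}(\mathbf{x}) = (\psi(\mathbf{x}, 1) + \psi(\mathbf{x}, -1))/2$ and a correlational component $\phi_{CSQ}(\mathbf{x}) = (\psi(\mathbf{x}, 1) - \psi(\mathbf{x}, -1))/2$, each of which needs to be estimated only within tolerance $\tau/2$. These two components are precisely what our two technical lemmas handle: \Cref{lemma:TI-sim} returns an estimate $v_{TI}$ of $\E_{\mathbf{x} \sim \mathcal{D}_{\mathbf{x}}'}[\phi_{TI}(\mathbf{x})]$ within $\tau/2$ using $\mathcal{O}(C^2 \log(1/\delta')/\tau^2)$ samples, while \Cref{lemma:CSQ-sim} returns an estimate $v_{CSQ}$ of $\E_{\mathbf{x} \sim \mathcal{D}_{\mathbf{x}}'}[\phi_{CSQ}(\mathbf{x}) f(\mathbf{x})]$ within $\tau/2$ using $\mathcal{O}(C^4 \log(1/\delta')/\tau^2)$ samples. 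By the triangle inequality, reporting $v := v_{TI} + v_{CSQ}$ answers the query $(\psi, \tau)$ correctly with probability at least $1 - 2\delta'$.

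To finish, I set $\delta' := \delta/(4q)$ and take a union bound over all $q$ queries issued by $\mathcal{A}$: with probability at least $1 - \delta$, every query is answered within tolerance $\tau$, so the output hypothesis $h$ of $\mathcal{A}$ satisfies $\pr_{\mathbf{x} \sim \mathcal{D}_{\mathbf{x}}'}[h(\mathbf{x}) \neq f(\mathbf{x})] \leq \epsilon$, and \Cref{lemma:opt_reduction} delivers the claimed bound $\err_{\mathcal{D}}(h) \leq \opt + \epsilon$. The aggregate sample and time complexity is $q \cdot \mathcal{O}(C^4 \log(q/\delta)/\tau^2) = \poly(d, 1/\epsilon, 1/\delta, C)$, matching the statement.

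The only real difficulty is already absorbed into \Cref{lemma:TI-sim} and \Cref{lemma:CSQ-sim}, whose proofs must handle that the normalization constant $Z = \mathcal{M}^{-1}$ is unknown. The main obstacle is therefore not in this theorem but in those lemmas: one has to simultaneously estimate $Z$ from the $\eta(\mathbf{x})$ values supplied by $\oracle_eta$ and the numerator from the same sample, then control the error of the resulting ratio without blow-up; the extra factors of $C$ and $C^2$ in the sample complexities come exactly from bounding $1/Z^2$ in this step. Given those two lemmas, the proof of \Cref{theorem:SQ_learnability} is a short orchestration of decomposition, tolerance splitting, and union bounding.
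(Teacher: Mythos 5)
Your proposal is correct and follows essentially the same route as the paper: simulate $\mathcal{A}$ on the reweighted distribution $\mathcal{D}_{\mathbf{x}}'$, decompose each statistical query into target-independent and correlational parts via \Cref{lemma:decomposition}, answer each part within $\tau/2$ using \Cref{lemma:TI-sim} and \Cref{lemma:CSQ-sim}, union-bound over the $q$ queries, and invoke \Cref{lemma:opt_reduction} to convert the guarantee on $\mathcal{D}_{\mathbf{x}}'$ to $\err_{\mathcal{D}}(h) \leq \opt + \epsilon$. The only cosmetic difference is that you set the per-query failure probability to $\delta/(4q)$ up front, whereas the paper uses $\delta/q$ and rescales at the end; both give the claimed polynomial bounds.
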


\begin{proof}
    First of all, given that $\mathcal{A}$ efficiently learns up to an $\epsilon$ error the concept class $\mathcal{C}$, it follows that $q = \poly(d, 1/\epsilon)$ and $1/\tau = \poly(d, 1/\epsilon)$. \Cref{lemma:CSQ-sim} implies that with probability at least $1 - \delta/q$, $| \E_{\mathcal{D}_{\mathbf{x}}'}[\phi_{CSQ}(\mathbf{x}) f(\mathbf{x})] - v_{CSQ} | \leq \tau/2$. Likewise, \Cref{lemma:TI-sim} implies that with probability at least $1 - \delta/q$, $|\E_{\mathcal{D}_{\mathbf{x}}'}[\phi_{TI}(\mathbf{x})] - v_{TI} | \leq \tau/2$. Thus, by \Cref{lemma:decomposition} it follows that $|\E_{\mathcal{D}_{\mathbf{x}}'}[\psi(\mathbf{x}, f(\mathbf{x}))] - v | \leq \tau$ with probability at least $1 - 2\delta/q$. By the union bound, we obtain that with probability at least $1 - 2\delta$ we answer every statistical query $(\psi, \tau)$ on distribution $\mathcal{D}_{\mathbf{x}}'$ that $\mathcal{A}$ makes within the desired tolerance. As a result, by the guarantee of algorithm $\mathcal{A}$, $\algosq$ returns a hypothesis $h$ that satisfies $\pr_{\mathcal{D}_{\mathbf{x}}'}[h(\mathbf{x}) \neq f(\mathbf{x})] \leq \epsilon$, and the theorem follows from \Cref{lemma:opt_reduction}.
\end{proof}

\section{Concluding Remarks}

The main contribution of this work is twofold. First, we extended a nexus made by \cite{DBLP:journals/corr/abs-2006-04787} between evolvability and Massart learnability to a broader class of noise models. As a corollary, we established the first polynomial time learning algorithm in the presence of noise with polynomially-bounded magnitude for the fundamental class of linear threshold functions. Second, we considered a stronger oracle in which for every labeled example we additionally obtain its flipping probability. In this model, we showed that every SQ learnable concept class is also efficiently learnable under severe noise -- such as Massart and Tsybakov, strengthening a classical result by Kearns in the context of RCN. We believe that our results are of particular practical significance given that the noise models we studied throughout this work are motivated and encountered in many practical applications. 

\bibliography{./refs.bib}

\appendix

\clearpage

\section{Optimality in the Realizable Instance}
\label{appendix:realizable}

In this section we analyze whether obtaing a hypothesis $h$ such that $\err_{\mathcal{D}}(h) \leq \opt + \epsilon$ implies that $\pr_{\mathcal{D}_{\mathbf{x}}}[h(\mathbf{x}) \neq f(\mathbf{x})] \leq \epsilon'$, for some $\epsilon'$ that depends polynomially on $\epsilon$. To be more precise, we show that this is indeed the case in the Massart as well as the Tsybakov model, but as we will see it does not hold in general. 

\paragraph{Massart Model} Consider a hypothesis $h$ such that $\err_{\mathcal{D}}(h) \leq \opt + \epsilon$, for any $\epsilon > 0$. Then, given that $\eta(\mathbf{x}) \leq \gamma$, it follows that

\begin{align}
    \err_{\mathcal{D}}(h) &= \opt + \E_{\mathbf{x} \sim \mathcal{D}_{\mathbf{x}}} [(1 - 2\eta(\mathbf{x})) \mathds{1} \{ h(\mathbf{x}) \neq f(\mathbf{x}) \} ] \\
    &\geq \opt + (1 - 2\gamma) \pr_{\mathbf{x} \sim \mathcal{D}_{\mathbf{x}}}[f(\mathbf{x}) \neq h(\mathbf{x})].
\end{align}
Thus, we obtain that 

\begin{equation}
    \pr_{\mathbf{x} \sim \mathcal{D}_{\mathbf{x}}}[f(\mathbf{x}) \neq h(\mathbf{x})] \leq \frac{\epsilon}{1 - 2\gamma}.
\end{equation}

As a result, it suffices to select $\epsilon = \epsilon'(1 - 2\gamma)$ to guarantee $\epsilon'$ excess error in the underlying realizable instance. 

\paragraph{Tsybakov Model} Again, consider a hypothesis $h$ such that $\err_{\mathcal{D}}(h) \leq \opt + \epsilon$, for any $\epsilon > 0$, and fix some $t \in [0, t_0]$. Employing similar ideas to \Cref{lemma:Tsybakov-magnitude} yields that 

\begin{align}
    \err_{\mathcal{D}}(h) &= \opt + \E_{\mathbf{x} \sim \mathcal{D}_{\mathbf{x}}} [(1 - 2\eta(\mathbf{x})) \mathds{1} \{ h(\mathbf{x}) \neq f(\mathbf{x}) \} ] \\
    &\geq \opt + \int_{\mathcal{X}_{good}} (1 - 2\eta(\mathbf{x})) \mathds{1} \{ h(\mathbf{x}) \neq f(\mathbf{x}) \} \mathcal{D}_{\mathbf{x}}(\mathbf{x}) d\mathbf{x} \\
    &\geq \opt + 2t \int_{\mathcal{X}_{good}} \mathds{1} \{ h(\mathbf{x}) \neq f(\mathbf{x}) \} \mathcal{D}_{\mathbf{x}}(\mathbf{x}) d\mathbf{x},
\end{align}
where $\mathcal{X}_{good}$ is defined as in \Cref{lemma:Tsybakov-magnitude}. Moreover, given that $\pr_{\mathcal{D}_{\mathbf{x}}}[\mathbf{x} \in \mathcal{X}_{good}] \geq 1 - A t^{\frac{\alpha}{1 - \alpha}}$, we obtain that 

\begin{equation}
    \pr_{\mathbf{x} \sim \mathcal{D}_{\mathbf{x}}}[h(\mathbf{x}) \neq f(\mathbf{x})] \leq \frac{\epsilon}{2t} + A t^{\frac{\alpha}{1 - \alpha}}.
\end{equation}

Therefore, in order to get $\pr_{\mathbf{x} \sim \mathcal{D}_{\mathbf{x}}}[h(\mathbf{x}) \neq f(\mathbf{x})] \leq \epsilon'$, for any $\epsilon' > 0$, it suffices to select $\epsilon$ such that 

\begin{equation}
    \epsilon = \sup_{t \in [0, t_0]} \left\{ 2t \epsilon' - 2A t^{\frac{1}{1 - \alpha}} \right\}.
\end{equation}
In particular, it follows that 

\[
     \sup_{t \in [0, t_0]} \left\{ 2t \epsilon' - 2A t^{\frac{1}{1 - \alpha}} \right\} = \left\{\begin{array}{lr}
        2(\epsilon')^{\frac{1}{\alpha}} \left( \frac{1 - \alpha}{A} \right)^{\frac{1 - \alpha}{\alpha}} - 2A \left(\epsilon' \frac{1 - \alpha}{A} \right)^{\frac{1}{\alpha}} & \text{if } t^* \leq t_0, \\
        2 t_0 \epsilon' - 2A t_0^{\frac{1}{1 - \alpha}} & \text{if } t^* > t_0, \\
        \end{array}\right.
  \]

where 

\begin{equation}
    t^* = \left( \epsilon' \frac{1 - \alpha}{A} \right)^{\frac{1 - \alpha}{\alpha}}.
\end{equation}

On the other hand, consider the following noise function:

\begin{definition}
    A noise function $\eta(\mathbf{x})$ satisfies a $\beta$-clean condition if there exists a region $\mathcal{X}_{clean} \subseteq \mathcal{X}$ such that 
    \begin{itemize}
        \item $\pr_{\mathbf{x} \sim \mathcal{D}_{\mathbf{x}}} [\mathbf{x} \in \mathcal{X}_{clean}] \geq \beta$;
        \item $\eta(\mathbf{x}) = 0, \forall \mathbf{x} \in \mathcal{X}_{clean}$.
    \end{itemize}
\end{definition}

This noise condition allows a $1 - \beta$ fraction of the probability mass to be corrupted with noise arbitrarily close to $1/2$. 
\begin{lemma}
    The magnitude of a $\beta$-clean noise with respect to any distribution $\mathcal{D}_{\mathbf{x}}$ is upper-bounded by $1/\beta$.
\end{lemma}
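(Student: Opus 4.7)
The plan is very short because the definition of magnitude together with the $\beta$-clean condition essentially gives the bound for free. Recall from \Cref{definition:magnitude} that
\begin{equation}
    \mathcal{M}^{-1} = \E_{\mathbf{x} \sim \mathcal{D}_{\mathbf{x}}}[1 - 2\eta(\mathbf{x})],
\end{equation}
so to upper-bound $\mathcal{M}$ by $1/\beta$ it suffices to lower-bound this expectation by $\beta$.

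First I would split the expectation according to the partition $\mathcal{X} = \mathcal{X}_{clean} \cup (\mathcal{X} \setminus \mathcal{X}_{clean})$. On the clean region the integrand equals $1$, since $\eta(\mathbf{x}) = 0$ for all $\mathbf{x} \in \mathcal{X}_{clean}$; on the complement the integrand is nonnegative because $\eta(\mathbf{x}) \in [0, \tfrac{1}{2})$ implies $1 - 2\eta(\mathbf{x}) > 0$. Hence
\begin{equation}
    \E_{\mathbf{x} \sim \mathcal{D}_{\mathbf{x}}}[1 - 2\eta(\mathbf{x})] \geq \int_{\mathcal{X}_{clean}} (1 - 2\eta(\mathbf{x})) \mathcal{D}_{\mathbf{x}}(\mathbf{x}) d\mathbf{x} = \pr_{\mathbf{x} \sim \mathcal{D}_{\mathbf{x}}}[\mathbf{x} \in \mathcal{X}_{clean}] \geq \beta,
\end{equation}
using the $\beta$-clean condition in the last step. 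Inverting this inequality yields $\mathcal{M} \leq 1/\beta$, as desired.

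There is no real obstacle here; the statement is essentially a direct unpacking of the two definitions, and the only thing to be careful about is the sign: dropping the integral over $\mathcal{X} \setminus \mathcal{X}_{clean}$ is legitimate precisely because $\eta(\mathbf{x}) < 1/2$ guarantees $1 - 2\eta(\mathbf{x}) > 0$ on that region. If one wanted to be slightly more informative, one could note that the bound is tight in the worst case, achieved by letting $\eta(\mathbf{x}) \to 1/2$ on the non-clean region and taking $\pr[\mathcal{X}_{clean}] = \beta$.
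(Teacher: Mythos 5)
Your proof is correct and follows exactly the same route as the paper's: write $\mathcal{M}^{-1}$ as an integral, drop the nonnegative contribution from $\mathcal{X} \setminus \mathcal{X}_{clean}$ (justified since $\eta(\mathbf{x}) < 1/2$), and observe that the remaining integral over $\mathcal{X}_{clean}$ equals $\pr_{\mathcal{D}_{\mathbf{x}}}[\mathbf{x} \in \mathcal{X}_{clean}] \geq \beta$. Your added remark about tightness is a small bonus not present in the paper.
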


\begin{proof}
    It follows that 
    
    \begin{equation}
        \mathcal{M}^{-1} = \int_{\mathcal{X}} (1 - 2\eta(\mathbf{x})) \mathcal{D}_{\mathbf{x}}(\mathbf{x}) d\mathbf{x} \geq \int_{\mathcal{X}_{clean}} \mathcal{D}_{\mathbf{x}}(\mathbf{x}) d\mathbf{x} \geq \beta.
    \end{equation}
\end{proof}

However, in this particular noise model a guarantee in the noisy distribution does not necessarily translate in the realizable instance. Indeed, assume that $\mathcal{D}_{\mathbf{x}}$ is the uniform distribution on $\mathbb{B}_2 = \{ \mathbf{x} \in \mathbb{R}^2 : ||\mathbf{x}||_2 \leq 1 \}$. We consider a partition of $\mathbb{B}_2$ into $\mathcal{X}_{clean}^r$, $\mathcal{X}_{clean}^{\ell}$, and the region $\mathbb{B}_2 \setminus (\mathcal{X}_{clean}^{r} \cup \mathcal{X}_{clean}^{\ell})$, as indicated in \Cref{fig:clean}, and we let $\pr_{\mathcal{D}_{\mathbf{x}}}[\mathbf{x} \in \mathcal{X}_{clean}^{\ell}] = \pr_{\mathcal{D}_{\mathbf{x}}}[\mathbf{x} \in \mathcal{X}_{clean}^{r}] = \frac{\beta}{2}$. In addition, we let $\eta(\mathbf{x}) = 0, \forall \mathbf{x} \in \mathcal{X}_{clean}^{r} \cup \mathcal{X}_{clean}^{\ell}$, while for the rest of the probability mass we let $\eta(\mathbf{x}) = \frac{1}{2} - \rho$, for some $\rho > 0$. 

The problem that arises is that in the limit of $\rho \to 0$, $\err_{\mathcal{D}}(h') \to \err_{\mathcal{D}}(h^*) = \opt$, for any $h'$ as in \Cref{fig:clean}. Yet, it is clear that in the realizable instance the error of $h'$ can be very far from the optimal. Nonetheless, it should be noted that a hypothesis $h$ such that $\err_{\mathcal{D}}(h) \leq \opt + \epsilon$ would classify correctly the clean data even in the presence of intense noise, a result that appears to be non-trivial and of independent interest.

\begin{figure}
    \centering
    \includegraphics[scale=0.35]{./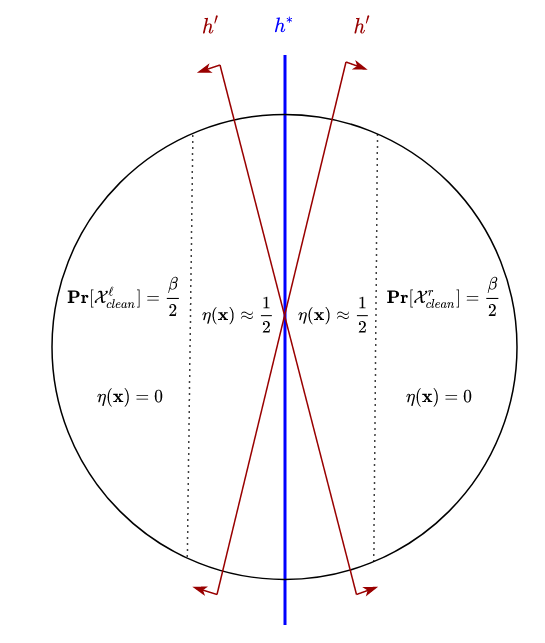}
    \caption{The geometry of our example; here $h^*$ represents the optimal classifier.}
    \label{fig:clean}
\end{figure}

\end{document}